\DeclareMathOperator{\argmin}{arg\,min}
\newcommand{\ours}{\textsc{DF$^2$}\xspace}
\definecolor{green}{HTML}{009B55}
\definecolor{mygray}{gray}{0.9}
\newcommand{\ie}{\textit{i}.\textit{e}.,}
\newcommand{\eg}{\textit{e}.\textit{g}.}
\newtheorem{proposition}{Proposition}
\newcommand{\std}[1]{\tiny{$\pm$#1}}
\title{\ours: Distribution-Free Decision-Focused Learning}
\author[1,2]{Lingkai Kong}
\author[2]{Wenhao Mu}
\author[2,3]{Jiaming Cui}
\author[2]{Yuchen Zhuang}
\author[2]{B. Aditya Prakash}
\author[2]{Bo Dai}
\author[2]{Chao Zhang}
\affil[1]{%
    Harvard University\\
    Cambridge, Massachusetts, USA
}
\affil[2]{%
    Georgia Institute of Technology\\
    Atlanta, Georgia, USA
}
\affil[3]{%
    Virginia Tech\\
    Blacksburg, Virginia, USA
  }
\begin{document}

\maketitle

\begin{abstract}
Decision-focused learning (DFL), which differentiates through the KKT conditions, has recently emerged as a powerful approach for predict-then-optimize problems. However, under probabilistic settings, DFL faces three major bottlenecks: model mismatch error, sample average approximation error, and gradient approximation error.  Model mismatch error stems from the misalignment between the model's parameterized predictive distribution and the true probability distribution. Sample average approximation error arises when using finite samples to approximate the expected optimization objective.  Gradient approximation error occurs when the objectives are non-convex and KKT conditions cannot be directly applied. In this paper, we present \ours—the first \textit{distribution-free} decision-focused learning method designed to mitigate these three bottlenecks. Rather than depending on a task-specific forecaster that requires precise model assumptions, our method directly learns the expected optimization function during training. To efficiently learn the function in a data-driven manner, we devise an attention-based model architecture inspired by the distribution-based parameterization of the expected objective. We evaluate \ours on two synthetic problems and three real-world problems, demonstrating the effectiveness of \ours. Our code can be found at: \url{https://github.com/Lingkai-Kong/DF2}.
\end{abstract}

\section{Introduction}

Many decision-making problems are fundamentally optimization problems that require the minimization of a cost function, which often depends on parameters that are both \textit{unknown} and \textit{context-dependent}. Typically, these parameters are estimated using observed features. For instance, hedge funds regularly recalibrate their portfolios to maximize expected returns, which involves predicting the future return rates of various stocks. Similarly, in personalized medicine, the selection of treatments for individual patients must predict unique responses to ensure optimal outcomes.



\begin{figure*}[t]
    \centering  \includegraphics[width=0.9\textwidth]{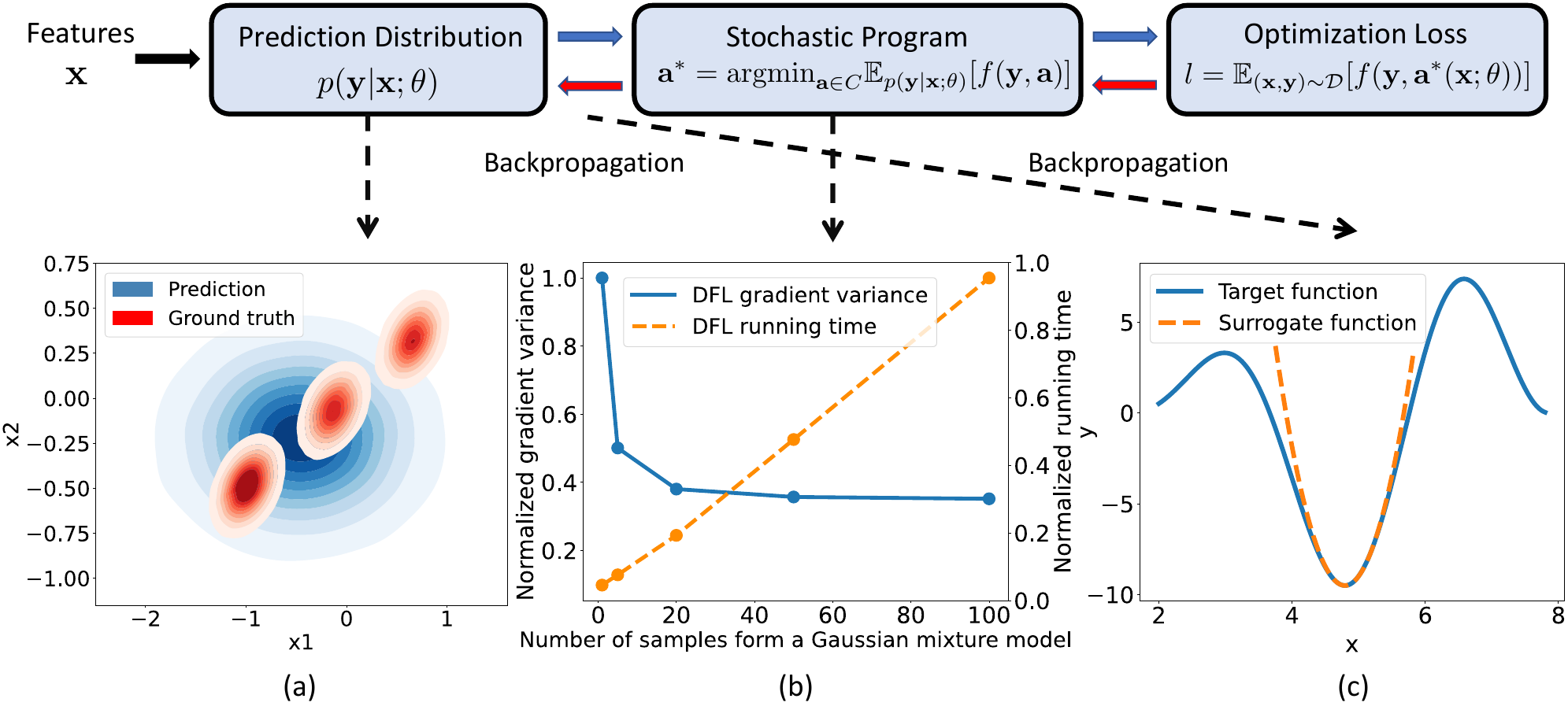}
    \caption{Decision-focused learning~\citep{donti2017task} directly optimizes the task loss and leads to better decision regret. However, it suffers from three significant bottlenecks. More illustrations are in Section \ref{sec:bottleneck}.}
    \label{fig:existing}
\end{figure*}

Given the growing capacity to train powerful deep learning models, a common strategy for this problem is the two-stage pipeline. This approach first learns a predictive model for unknown parameters using a generic loss function (\eg, negative log-likelihood) during the prediction stage, and then applies the model’s outputs in a downstream optimization problem. Despite its widespread use, this pipeline implicitly assumes that better predictive accuracy—measured by the prediction loss—translates to better optimization performance. However, this assumption often breaks down, as prediction errors can have non-uniform effects on the optimization objective.
To address this issue, \emph{Decision-Focused Learning (DFL)}~\citep{donti2017task, wilder2019melding,  wang2020automatically, sun2023alternating, yan2021surrogate, rodriguez2024right} integrates the prediction and optimization stages into a single end-to-end model. A prominent line of work leverages the implicit function theorem and the KKT conditions to differentiate through the optimization layer~\citep{donti2017task}, enabling the learning process to align predictive outputs directly with decision quality. This results in models that are trained explicitly for decision-making, often framed as regret minimization.

Despite its promising results, DFL via differentiation through KKT conditions faces several critical bottlenecks in the probabilistic setting, where the predictive model outputs a distribution rather than a point estimate:
(1) Model mismatch error: real-world applications often operate in highly uncertain environments and involve complex, multimodal probability distributions. In contrast, DFL by differentiating through KKT conditions requires simple parameterized distribution models for computational feasibility, leading to a mismatch. (2) Sample average approximation error: When there is no closed-form expression for the expected optimization objective, we typically draw a finite number of samples from the distribution for averaging, which will introduce extra statistical errors. (3) Gradient approximation error:  the KKT condition is only a sufficient condition for optimal solution of convex problem, which is unable to characterize the optimal solution in non-convex setting, and thus, will lead to inaccuracies that cumulatively result in lower decision quality. Recent works \citep{kongend, shah2022learning, shah2023leaving} have proposed surrogate objectives to bypass the challenges of gradient computation. However, these approaches are still model-based and suffer from the other two bottlenecks. While SPO~\citep{elmachtoub2022smart} generally converges to a decision with optimal expected costs regardless of the distribution, it is restricted to linear objectives.


We propose \ours, the first distribution-free decision-focused learning method, to mitigate the three bottlenecks and handle complex objectives beyond the linear class.  Instead of relying on a task-specific forecaster that necessitates precise model assumptions, we propose to learn directly the expectation of the optimization objective function from the data. Upon learning, we can obtain the optimal decision by maximizing the learned expected function within the feasible space. In order to ensure that the network architecture lies within the true model class and minimize bias error, we have developed an attention-based network architecture that emulates the distribution-based parameterization of the expected objective. This attention architecture also preserves the convexity of the original optimization objective.  In contrast to the two-stage model, \textit{\ours} is decision-aware. Compared to DFL methods, \textit{\ours} avoids model mismatch error, gradient approximation error, and sample average approximation error at test time.

Our main contributions can be summarized as follows: (1) We propose a distribution-free training objective for DFL. It mitigates the three bottlenecks of existing methods under the probabilistic setting. (2) We propose an attention-based network architecture inspired by the distribution-based parameterization to ensure the network architecture is within the true model class. (3) Experiments on two synthetic datasets and three real-world datasets show that our method can achieve better performance than existing DFL methods.

\section{Preliminaries}
\label{sec:bottleneck}

\subsection{Decision-Focused Learning by Differentiating Through KKT Conditions}

In the predict-then-optimize problem, a predictor $\mathbf{M}_{\theta}$
  inputs features $\mathbf{x}$ and outputs a point estimate $\mathbf{\hat y}$. This estimate parameterizes the optimization problem $\argmin_{\mathbf{a}\in C} f(\mathbf{y}, \mathbf{a})$, where $f$ is the cost function, $\mathbf{a}$ is the decision variable, and $C$ is the feasible space.

However, point estimations fail to capture the uncertainty inherent in model predictions \citep{abdar2021review} and the stochastic nature of the problem parameters \citep{schneider2007stochastic}. To address this, we focus on a probabilistic framework, wherein the predictor's output is a probability distribution $p_{\theta}(\mathbf{y}|\mathbf{x})$, rather than a mere point estimate. This allows us to engage in stochastic optimization, where the objective is to find the optimal action $\mathbf{a}^*(\mathbf{x};\theta)$ that minimizes the expected cost, formalized as $\argmin_{\mathbf{a}\in C}\mathbb{E}_{p_{\theta}(\mathbf{y}|\mathbf{x})}[f(\mathbf{y},\mathbf{a})]$. This method more effectively accounts for the uncertainties and variabilities present in the parameters.

Predictions are  then evaluated  based on the decision loss they generate, essentially the cost function's value using the true parameters $\mathbf{y}$. For a dataset $\mathcal{D}=\{\mathbf{x}_i,\mathbf{y}_i \}_{i=1}^N$, the goal is to train a  model \(\mathbf{M}_{\boldsymbol{\theta}}\) to minimize the decision loss:
\begin{align}  \textstyle \theta^*=\argmin_{\theta}\frac{1}{N}\sum_{i=1}^Nf(\mathbf{y}_i, \mathbf{a}^*(\mathbf{x}_i;\theta)).
\end{align}
By directly optimizing the decision loss, the gradient of the model parameters can be calculated using the chain rule: $\frac{\mathrm{d} f(\mathbf{y},\mathbf{a}^{*}(\mathbf{x};\theta))}{\mathrm{d} \theta}=\frac{\mathrm{d} f(\mathbf{y},\mathbf{a}^{*}(\mathbf{x};\theta))}{\mathrm{d} \mathbf{a}^{*}(\mathbf{x};\theta)}\frac{\mathrm{d} \mathbf{a}^{*}(\mathbf{x};\theta)}{\mathrm{d} {\theta}}.$ To compute the Jacobian $\frac{\mathrm{d} {\mathbf{a}^{*}(\mathbf{x};\theta)}}{\mathrm{d} \theta}$ for backpropagation, OptNet \citep{amos2017optnet}  assumes quadratic optimization objectives and differentiates through the KKT conditions using the implicit function theorem. Later, cvxpylayers \citep{agrawal2019differentiable} extends it to more general cases of convex optimization using disciplined parameterized programming (DPP) grammar.


\begin{figure*}
    \centering
    \includegraphics[width=0.9\textwidth]{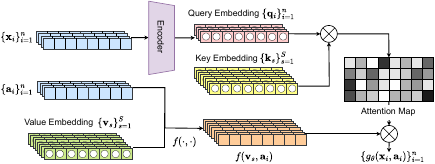}
    \caption{The proposed attention-based network architecture of \ours. The network contains an encoder and a set of learnable attention points $\{\mathbf{k}_s,\mathbf{v}_s\}_{s+1}^S$. Given an input feature $\mathbf{x}$, the encoder first project it to query embedding space and then compute the attention weights by its dot product with the key embeddings. The final function value $g(\mathbf{x},\mathbf{a})$ is a weighted combination of $f(\mathbf{v},\mathbf{a})$.  The designed network architecture can effectively reduce the bias error in Proposition~\ref{prop:1}.}
    \label{fig:overall}
\end{figure*}

\subsection{Bottlenecks under the Probabilistic Setting}

Although DFL by differentiating through KKT condition can achieve better decisions compared to the two-stage learning, they have three significant bottlenecks under the probabilistic setting.

\noindent\textbf{Bottleneck 1: Model Mismatch Error.} Real-world applications often involve complex and multi-modal probability distributions $p(\mathbf{y}|\mathbf{x})$~\citep{kong2023uncertainty, kong2024diffusion, kong2024two,li2023muben}. One prominent example is the wind power forecasting task, where the environment exhibits high uncertainty due to the dynamic and stochastic nature of wind patterns. Factors such as changing weather conditions, terrain, and turbulence can significantly affect the true distribution of wind power, making it highly intricate and challenging to model accurately.

However, existing approaches \citep{donti2017task, kongend} tend to assume simple distributions, \eg, isotropic Gaussian distribution, for computational feasibility. However, this assumption can lead to considerable misalignment between the model's parameterized distribution and the true underlying distribution in tasks with high uncertainty. This mismatch results in poor approximations and reduced decision-focused learning performance. Fig.~\ref{fig:existing}(a) illustrates this issue using a ground-truth distribution composed of a mixture of three Gaussians. As we can see, the performances of DFL approaches suffer due to the model mismatch error, which is particularly pronounced in tasks with highly uncertain environments.

\noindent\textbf{Bottleneck 2: Sample Average Approximation Error.} In complex optimization problems, closed-form expressions for expectations might be unavailable, necessitating the use of sample average approximation \citep{kim2015guide,verweij2003sample,kleywegt2002sample}. Although adopting a more expressive distribution, such as a mixture density network, could potentially improve performance, doing so introduces another issue—sample approximation error. As shown in Fig.~\ref{fig:existing}(b), when dealing with intricate distributions, increasing the sample size reduces the gradient variance slowly but demands substantially higher computational resources and longer running times. 

\noindent\textbf{Bottleneck 3: Gradient Approximation Error.} The KKT condition can only be applied to convex objectives. However, many real-world applications involve complicated non-convex objectives. Though \citet{perrault2019decision, wang2020scalable} propose to approximate the non-convex objectives by a quadratic function around a local minimum to approximate $\frac{\mathrm{d} \mathbf{a}^*}{\mathrm{d} \theta}$ (Fig.~\ref{fig:existing}(c)), the inaccurate gradients may be aggregated during the training iterations and thus lead to poor decisions. 

The first two errors occur during both training and testing, whereas gradient approximation errors occur only during training. Recently, several methods~\citep{kongend,shah2022learning, shah2023leaving} have proposed surrogate losses for DFL to avoid differentiating through KKT conditions. However, they still suffer from the first two bottlenecks.

It should be noted that when the objective function is linear, the expectation of a linear function has a closed-form expression and only requires estimating the mean of a distribution. Therefore, the model does not suffer from these bottlenecks. As a result, SPO \citep{elmachtoub2022smart} proves that it converges to a decision with optimal expected costs regardless of the distribution. In our paper, we consider a more complex setting where estimating the expected cost requires the entire predictive distribution.


\section{Distribution-Free Decision-Focused Learning}
In this section, we introduce \ours which mitigates all the three bottlenecks within a single model. We first introduce the distribution-free training objective which transforms DFL into a function approximation problem. Then, we design an attention-based architecture inspired by the distribution-based parameterization to reduce the bias error. Finally, we discuss how to obtain the optimal decision during inference.

\subsection{Distribution-Free Training Objective}
Existing DFL methods primarily rely on a distribution-based approach. These techniques learn a forecaster that outputs probability distribution $p(\mathbf{y}|\mathbf{x})$ based on various model assumptions. However, a more straightforward approach is to estimate the expected cost function $\mathbb{E}_{p(\mathbf{y}|\mathbf{x})}[f(\mathbf{y},\mathbf{a})]$ directly from the training data $\mathcal{D}=\{{\mathbf{x}_i, \mathbf{y}_i\}}_{i=1}^N$.

The cornerstone of our method is the observation that the expected cost objective is only a function of $\mathbf{a}$ and $\mathbf{x}$, which is represented as $g(\mathbf{x}, \mathbf{a})=\mathbb{E}_{p(\mathbf{y}|\mathbf{x})}[f(\mathbf{y},\mathbf{a})]$. We propose a direct approach to learn a neural network with parameters $\theta$ to match the expected cost function $\mathbb{E}_{p(\mathbf{y}|\mathbf{x})}  [f(\mathbf{y},\mathbf{a})]$. Our objective is to minimize the mean square error (MSE) between the fitted function $g(\mathbf{x},\mathbf{a})$ and the cost function $f(\mathbf{y},\mathbf{a})$ sampled from $p(\mathbf{x}, \mathbf{y})$:
\begin{align}
g^*(\mathbf{x},\mathbf{a})= \argmin_{g} \mathbb{E}_{(\mathbf{x},\mathbf{y})} \mathbb{E}_a[g(\mathbf{x},\mathbf{a})-f(\mathbf{y},\mathbf{a})]^2.
\label{eq:objective}
\end{align}
The proposed training objective can be efficiently optimized using stochastic gradient-based methods such as ADAM \citep{kingma2015adam}. 

In the ideal case, when we have infinite training data and model capacity, the optimal solution $g^*$ of Eq.~\ref{eq:objective} is the ground-truth conditional expectation $\mathbb{E}_{p(\mathbf{y}|\mathbf{x})}[f(\mathbf{y},\mathbf{a})]$. Upon learning the optimal function, the optimal action can be derived by maximizing the fitted function $\mathbf{a}^*=\argmin_{\mathbf{a}\in C} g_{\theta}(\mathbf{x}, \mathbf{a})$. However, in practical situations where training data and model capacity are limited, we obtain the expected error on the test set as the following proposition.
\begin{proposition}\label{prop:1}
The expected MSE of the optimal solution $g^*$ on the test set is: 
\begin{align}
 \text{MSE}_{\rm test}  &=  \underbrace{\mathbb{E}_{\mathcal{D}'}\left [ \left(g^*_{\mathcal{D}'}(\mathbf{x,\mathbf{a}})- \mathbb{E}_{p(\mathbf{y}|\mathbf{x})}[f(\mathbf{y},\mathbf{a})]   \right)^2 \right]}_{\text{Bias}}\nonumber
 \\ &+ \underbrace{\mathbb{E}_{\mathcal{D}'}\left[ \left(g^*_{\mathcal{D}'}(\mathbf{x,\mathbf{a}})-  \mathbb{E}_{\mathcal{D}'}[g^*_{\mathcal{D}'}(\mathbf{x,\mathbf{a}})]  \right)^2 \right]}_{\text{Variance}}, \nonumber
\end{align}
where $\mathcal{D}'$ denotes the training dataset  augmented with the sampled actions $\mathbf{a}$, and $g^*_{\mathcal{D}'}(\mathbf{x},\mathbf{a})$ denotes the function fitted on the dataset $\mathcal{D}'$.

Proof. See Appendix~\ref{s:Prop1} for a detailed proof.
\end{proposition}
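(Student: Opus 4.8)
The plan is to recognize Proposition~\ref{prop:1} as the classical bias--variance decomposition of the expected squared error, specialized to our estimator $g^*_{\mathcal{D}'}$ and the regression target $t(\mathbf{x},\mathbf{a}) := \mathbb{E}_{p(\mathbf{y}\mid\mathbf{x})}[f(\mathbf{y},\mathbf{a})]$, which by the discussion preceding the proposition is the population minimizer of Eq.~\ref{eq:objective}. I would first fix a test pair $(\mathbf{x},\mathbf{a})$ (or, equivalently, work in expectation over the test distribution of $(\mathbf{x},\mathbf{a})$, which passes through everything by linearity) and write $\text{MSE}_{\rm test} = \mathbb{E}_{\mathcal{D}'}\big[(g^*_{\mathcal{D}'}(\mathbf{x},\mathbf{a}) - t(\mathbf{x},\mathbf{a}))^2\big]$, so that the only randomness inside the expectation is the training set $\mathcal{D}'$ (the features/labels together with the sampled actions used to build the empirical version of Eq.~\ref{eq:objective}).

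Next I would introduce $\bar g(\mathbf{x},\mathbf{a}) := \mathbb{E}_{\mathcal{D}'}[g^*_{\mathcal{D}'}(\mathbf{x},\mathbf{a})]$, the average fitted function over draws of the training data, and insert it via $g^*_{\mathcal{D}'} - t = (g^*_{\mathcal{D}'} - \bar g) + (\bar g - t)$. Squaring and applying $\mathbb{E}_{\mathcal{D}'}$ produces three terms: the variance $\mathbb{E}_{\mathcal{D}'}[(g^*_{\mathcal{D}'} - \bar g)^2]$; the squared bias $(\bar g - t)^2$, which is deterministic so the expectation is vacuous; and the cross term $2\,\mathbb{E}_{\mathcal{D}'}\big[(g^*_{\mathcal{D}'} - \bar g)(\bar g - t)\big]$. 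The key step is that the cross term vanishes: $(\bar g - t)$ does not depend on $\mathcal{D}'$, so it factors out, leaving $2(\bar g - t)\,\mathbb{E}_{\mathcal{D}'}[g^*_{\mathcal{D}'} - \bar g] = 0$ directly from the definition of $\bar g$. Collecting the two surviving terms (and, if desired, restoring the outer expectation over the test distribution of $(\mathbf{x},\mathbf{a})$) yields the claimed Bias $+$ Variance form.

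I do not expect a serious obstacle here; the argument is a few lines of algebra once the setup is pinned down. The parts that need care are purely bookkeeping: (i) being explicit that the target is the population optimum $\mathbb{E}_{p(\mathbf{y}\mid\mathbf{x})}[f(\mathbf{y},\mathbf{a})]$, so that in the infinite-data limit both terms vanish, consistent with the earlier remark; (ii) keeping the test argument $(\mathbf{x},\mathbf{a})$ cleanly separated from the training randomness $\mathcal{D}'$, since the notation overloads $\mathbf{x},\mathbf{a}$ as both inputs to $g^*_{\mathcal{D}'}$ and as the actions augmenting the dataset; and (iii) justifying the interchange of the (finite, well-behaved) expectations when moving any outer test expectation through the decomposition, which is immediate by Fubini/linearity. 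I would also note that if one instead measures $\text{MSE}_{\rm test}$ against a noisy fresh target $f(\mathbf{y},\mathbf{a})$ rather than $t(\mathbf{x},\mathbf{a})$, the same steps yield an extra irreducible term $\mathbb{E}[(f(\mathbf{y},\mathbf{a}) - t(\mathbf{x},\mathbf{a}))^2]$; since the stated decomposition has only two terms, the intended reading is that the target is $t(\mathbf{x},\mathbf{a})$ itself.
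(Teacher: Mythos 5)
Your proposal is correct and follows essentially the same route as the paper's Appendix proof: insert the mean fitted function $\bar g = \mathbb{E}_{\mathcal{D}'}[g^*_{\mathcal{D}'}]$, expand the square, and kill the cross term because $\bar g - \mathbb{E}_{p(\mathbf{y}|\mathbf{x})}[f(\mathbf{y},\mathbf{a})]$ is deterministic with respect to $\mathcal{D}'$. Your closing remarks (that the target is the conditional expectation rather than a noisy $f(\mathbf{y},\mathbf{a})$, hence no irreducible-noise term, and that the squared-bias term is really $(\bar g - t)^2$) are accurate and in fact state the bias term more carefully than the proposition's displayed notation does.
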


\noindent \textbf{Sampling Action from the Constrained Space}. In practice, it's unnecessary to fit the true objective across the entire Euclidean space. Instead, we only need to sample from the constrained space $C$. There are several strategies for this. One simple approach is to sample from a relaxed version of the constrained space,such as an outer bounding box that encloses $C$. This allows us to sample each dimension of $\mathbf{a}$ independently from a uniform distribution. Moreover, many predict-then-optimize problems are resource allocation problems where the decision variable $\mathbf{a}$ is a simplex; for a simplex, we can directly sample from the Dirichlet distribution. Appendix~\ref{s:sampling} provides more illustrations on the relaxed constrained sampling. Alternatively, we can employ Markov chain Monte Carlo (MCMC) methods to uniformly sample within $C$, such as Ball Walk~\citep{lovasz1990mixing} and the hit-and-run algorithm~\citep{belisle1993hit, lovasz1999hit}. However, these methods typically incur higher computational costs.

In contrast to traditional DFL, our framework effectively transforms decision-focused learning into a function approximation problem, circumventing the complexities of solving and differentiating through the optimization problem. This approach avoids both model mismatch error and gradient approximation error. While we do not claim to fully address the sample average approximation error during training, as we still rely on finite data to estimate the expected cost function, we can avoid this error at inference time, see Section~\ref{sec:inference}. 


As we can see from Proposition~\ref{prop:1}, the test MSE consists of the bias and variance terms. The variance term will be reduced by sampling more data. To ensure that the bias error term approaches zero with more training data, it is crucial to keep the network architecture within the model class. To tackle this challenge, we introduce an attention-based network architecture in the following subsection. 











\subsection{Distribution-Based Parameterization}
The key of our architecture design is to mimic the distribution-based parameterization of the expected cost function. Since our training objective bypass the need of    solving and differentiating through the stochastic optimization problem, we can adopt an expressive non-parametric distribution with kernel conditional mean embedding (CME) to parameterize our model. The proposed network  architecture can lead to zero bias error in Proposition~\ref{prop:1}

\begin{table}
\small
\centering
\begin{tabular}{c c c}
\toprule[1.5pt]
Variable & $\mathbf{x}$ & $\mathbf{y}$ \\
Domain & $\mathcal{X}$ & $\mathcal{Y}$ \\
Kernel & $\mathcal{R}_{\mathbf{x}}(\mathbf{x},\mathbf{x}')$ & $\mathcal{R}_{\mathbf{y}}(\mathbf{y},\mathbf{y}')$ \\
Feature map & $\mathcal{R}_{\mathbf{x}}(\mathbf{x},\cdot)$ & $\mathcal{R}_{\mathbf{y}}(\mathbf{y},\cdot)$ \\
RKHS & $\mathcal{G}$ & $\mathcal{F}$ \\
\bottomrule[1.5pt]
\end{tabular}
\caption{Table of Notations}
\label{table:notations}
\end{table}

CME \citep{song2009hilbert, song2013kernel} is a powerful tool to compute the expectation of a function in the reproducing kernel Hilbert space (RKHS), without the curse of dimensionality. Let $\mathcal{F}$ be a RKHS over the domain of $\mathbf{y}$ with kernel function $\mathcal{R}_\mathbf{y}(\mathbf{y},\mathbf{y}')$ and inner product $\langle \cdot, \cdot \rangle_{\mathcal{F}}$. 
For a particular $\mathbf{a}$, we denote the corresponding function as $f_\mathbf{a}(\mathbf{y})$. CME projects the conditional distribution to its expected feature map $\mu_{\mathbf{y}|\mathbf{x}}\triangleq \mathbb{E}_{p(\mathbf{y}|\mathbf{x})}[\mathcal{R}_{\mathbf{y}}(\mathbf{y}, \cdot)]$ and evaluates 
the conditional expectation of any RKHS function, $f_\mathbf{a} \in \mathcal{F}$, as an inner product in $\mathcal{F}$ using the reproducing property:
\begin{align}
\mathbb{E}_{p(\mathbf{y}|\mathbf{x})}[f_{\mathbf{a}}]&=\int p(\mathbf{y}|\mathbf{x})\langle \mathcal{R}_y(\mathbf{y},\cdot), f_{\mathbf{a}} \rangle_{\mathcal{F}}d\mathbf{y} \nonumber \\
&= \left\langle \int p(\mathbf{y}|\mathbf{x})\mathcal{R}_y(\mathbf{y}, \cdot)\mathrm{d}\mathbf{y}, f_{\mathbf{a}} \right\rangle_{\mathcal{F}} =   \langle \mu_{\mathbf{y}|\mathbf{x}}, f_{\mathbf{a}}\rangle_{\mathcal{F}}. \nonumber
\end{align}
Assume that for all $f_{\mathbf{a}}\in \mathcal{F}$, the conditional expectation $\mathbb{E}_{p(\mathbf{y}|\mathbf{x})}[f_{\mathbf{a}}(\mathbf{y})]$ is an element of the RKHS over the domain of $\mathbf{x}$, the conditional embedding can be estimated with a finite dataset $\{\mathbf{x}_s, \mathbf{y}_s\}_{s=1}^S$ as
$\hat{\mu}_{\mathbf{y}|\mathbf{x}}=\sum_{s=1}^S\beta_s(\mathbf{x})\mathcal{R}_{\mathbf{y}}(\mathbf{y}_s,\cdot)$,
 where $\beta_s$ is a real-valued weight and can be computed with matrix calculation (see more details about this computation in Appendix~\ref{s:Background}). 
 
 One advantage of CME is that $\hat{\mu}_{\mathbf{y}|\mathbf{x}}$ can converge to $\mu_{\mathbf{y}|\mathbf{x}}$ in the RKHS norm at an overall rate of $\mathcal{O}(S^{-\frac{1}{2}})$ \citep{song2009hilbert}, which is independent of the input dimensions. This property let CME works well in the high-dimensional space.  With the estimated CME, the conditional expectation can be computed by the reproducing property:
 \begin{align}
\mathbb{E}_{p(\mathbf{y}|\mathbf{x})}[f_{\mathbf{a}}(\mathbf{y})] &=\langle \hat{\mu}_{\mathbf{y}|\mathbf{x}}, f_{\mathbf{a}}\rangle_{\mathcal{F}} = \left\langle  \sum_{s=1}^S\beta_s(\mathbf{x})\mathcal{R}_{\mathbf{y}}(\mathbf{y}_s, \cdot), f_{\mathbf{a}} \right\rangle_{\mathcal{F}}
\nonumber\\& = \sum_{s=1}^S\beta_s(\mathbf{x})f_{\mathbf{a}}(\mathbf{y}_s).
\label{eq:cme}
\end{align}
As shown in Eq.~\ref{eq:cme}, the formulation is essentially a weighted combination of $f_\mathbf{a}(\mathbf{y}_s)$, where the weights are conditioned on the input features $\mathbf{x}$. This observation inspires us to leverage attention-based parameterization to represent the function $g(\mathbf{x},\mathbf{a})$. The attention mechanism forms the foundation of the transformer architecture \citep{vaswani2017attention} and has been successfully utilized across various deep learning applications \citep{kenton2019bert,brown2020language,dosovitskiy2021an}. 

Inspired by this, we introduce a set of learnable attention points $\{{\mathbf{k}_s, \mathbf{v}_s}\}_{s=1}^S$, where $\mathbf{k}$ is the key embedding and $\mathbf{v}$ is the corresponding value embedding. For an input $\mathbf{x}$, the encoder first maps it to the query embedding space $\mathbf{q}$ and compute the attention weights by its product with the key embeddings. We set the value function as $f(\mathbf{v}_s,\mathbf{a})$ and, consequently, reformulate the function $g(\mathbf{x},\mathbf{a})$ using the softmax attention mechanism \citep{vaswani2017attention}:
\begin{align}
g(\mathbf{x},\mathbf{a})= &\text{Softmax}\left(\left[\frac{\mathbf{q}(\mathbf{x})^\top\mathbf{k}_1}{\sqrt{d}}, \cdots, \frac{\mathbf{q}(\mathbf{x})^\top\mathbf{k}_S}{\sqrt{d}}\right]\right)^\top \nonumber\\&[f(\mathbf{v}_1, \mathbf{a}), \cdots, f(\mathbf{v}_S, \mathbf{a})],
\label{eq:attention}
\end{align}
where $d$ is the dimension size of the key embeddings and value embeddings. 
\begin{proposition}\label{prop:my_proposition}
It holds for any $\mathbf{x}$ and $\mathbf{a}$, the function $g(\mathbf{x},\mathbf{a})$ defined by the softmax attention in Eq.~\ref{eq:attention} $\mathbb{E}_{\hat{p}_{\mathcal{R}}(\mathbf{y}|\mathbf{x})}[f(\mathbf{y},\mathbf{a})]=g(\mathbf{x},\mathbf{a})$. Here,
$\hat{p}_{\mathcal{R}}(\mathbf{y}|\mathbf{x})$ is a  parameterization restriction of $p(\mathbf{y}|\mathbf{x})$. 

Proof. See Appendix~\ref{s:Prop2} for a detailed proof.
\label{prop:2}
\end{proposition}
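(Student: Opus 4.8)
The plan is to show that the softmax attention in Eq.~\ref{eq:attention} is literally an instance of the CME-based conditional expectation formula in Eq.~\ref{eq:cme}, for a suitably defined surrogate conditional distribution $\hat{p}_{\mathcal{R}}(\mathbf{y}|\mathbf{x})$. The skeleton of the argument: identify the softmax weights $\alpha_s(\mathbf{x}) \triangleq \mathrm{Softmax}(\ldots)_s$ with the CME weights $\beta_s(\mathbf{x})$, identify the learnable value embeddings $\mathbf{v}_s$ with the support points $\mathbf{y}_s$ of a finite-support embedding, and then define $\hat{p}_{\mathcal{R}}(\mathbf{y}|\mathbf{x})$ so that $\mathbb{E}_{\hat{p}_{\mathcal{R}}(\mathbf{y}|\mathbf{x})}[f(\mathbf{y},\mathbf{a})] = \sum_{s=1}^S \alpha_s(\mathbf{x}) f(\mathbf{v}_s,\mathbf{a})$, which is exactly $g(\mathbf{x},\mathbf{a})$ by definition. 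The cleanest choice is the discrete distribution $\hat{p}_{\mathcal{R}}(\mathbf{y}|\mathbf{x}) = \sum_{s=1}^S \alpha_s(\mathbf{x})\,\delta_{\mathbf{v}_s}(\mathbf{y})$, which is a genuine probability distribution precisely because the softmax outputs are nonnegative and sum to one.

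First I would recall from Appendix~\ref{s:Background}/Eq.~\ref{eq:cme} that the empirical CME with support $\{\mathbf{y}_s\}_{s=1}^S$ gives $\mathbb{E}_{p(\mathbf{y}|\mathbf{x})}[f_{\mathbf{a}}] = \sum_s \beta_s(\mathbf{x}) f_{\mathbf{a}}(\mathbf{y}_s)$, and observe that this is the conditional expectation of $f_{\mathbf{a}}$ under the signed measure $\sum_s \beta_s(\mathbf{x})\delta_{\mathbf{y}_s}$. Then I would argue that the architecture of Eq.~\ref{eq:attention} restricts this parameterization in two ways: (i) the value support points become free parameters $\mathbf{v}_s$ rather than data points $\mathbf{y}_s$, and (ii) the weight vector is produced by a softmax, hence constrained to the simplex rather than being an arbitrary real vector. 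I would package exactly this as the definition of the "parameterization restriction" $\hat{p}_{\mathcal{R}}(\mathbf{y}|\mathbf{x})$: it is the subclass of CME-representable conditional laws whose mixing weights lie in the probability simplex and whose support points are the learnable $\mathbf{v}_s$. With $\hat{p}_{\mathcal{R}}(\mathbf{y}|\mathbf{x}) \triangleq \sum_{s=1}^S \alpha_s(\mathbf{x})\delta_{\mathbf{v}_s}(\mathbf{y})$ and $\alpha_s(\mathbf{x}) = \mathrm{Softmax}\!\big([\mathbf{q}(\mathbf{x})^\top\mathbf{k}_1/\sqrt d,\ldots,\mathbf{q}(\mathbf{x})^\top\mathbf{k}_S/\sqrt d]\big)_s$, a one-line computation gives $\mathbb{E}_{\hat{p}_{\mathcal{R}}(\mathbf{y}|\mathbf{x})}[f(\mathbf{y},\mathbf{a})] = \sum_s \alpha_s(\mathbf{x}) f(\mathbf{v}_s,\mathbf{a}) = g(\mathbf{x},\mathbf{a})$, which is the claim, valid for every $\mathbf{x}$ and every $\mathbf{a}$ since $\mathbf{a}$ is just a parameter of the integrand.

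I would also verify the two sanity checks that make the statement meaningful: that $\hat{p}_{\mathcal{R}}(\mathbf{y}|\mathbf{x})$ is a bona fide probability distribution (nonnegativity and normalization of the softmax weights) and that it really is a restriction of the CME parameterization rather than something unrelated — i.e., that taking $\mathbf{v}_s = \mathbf{y}_s$ and allowing the $\alpha_s$ to be the (possibly signed, possibly un-normalized) $\beta_s$ recovers Eq.~\ref{eq:cme}. The main obstacle I anticipate is not the algebra, which is essentially immediate, but stating precisely what "$\hat{p}_{\mathcal{R}}$ is a parameterization restriction of $p(\mathbf{y}|\mathbf{x})$" means: one must be careful that the empirical CME weights $\beta_s$ need not be nonnegative or sum to one in general, so the softmax version is a strict subfamily, and one should be explicit that the reproducing property / Eq.~\ref{eq:cme} is being invoked with the value function $f(\cdot,\mathbf{a})$ in the role of $f_{\mathbf{a}}$, implicitly assuming $f(\cdot,\mathbf{a}) \in \mathcal{F}$ for each fixed $\mathbf{a}$ (or, for the discrete surrogate, needing nothing at all beyond $f(\mathbf{v}_s,\mathbf{a})$ being well-defined). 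Making that assumption explicit, and then pushing the expectation through the finite sum, completes the proof.
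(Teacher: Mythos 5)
Your proof is correct, but it takes a genuinely different route from the paper's. The paper defines $\hat{p}_{\mathcal{R}}(\mathbf{y}|\mathbf{x})$ as an absolutely continuous conditional kernel density estimator, $\hat{p}_{\mathcal{R}}(\mathbf{y}|\mathbf{x}) = \sum_s \mathcal{R}_{\mathbf{x}}(\mathbf{k}_s,\mathbf{q}(\mathbf{x}))\,\mathcal{R}_{\mathbf{y}}(\mathbf{y}_s,\mathbf{y}) \big/ \sum_s \mathcal{R}_{\mathbf{x}}(\mathbf{k}_s,\mathbf{q}(\mathbf{x}))$, takes $\mathcal{R}_{\mathbf{x}}$ to be the exponential kernel $\exp(\mathbf{q}^\top\mathbf{k}/\sqrt{d})$ so that the normalized $\mathbf{x}$-weights become exactly the softmax, and then collapses the remaining integral $\int \mathcal{R}_{\mathbf{y}}(\mathbf{y}_s,\mathbf{y}) f(\mathbf{y},\mathbf{a})\,\mathrm{d}\mathbf{y}$ to $f(\mathbf{y}_s,\mathbf{a})$ via a change of variables $\mathbf{z}=f(\mathbf{y},\mathbf{a})$ and a symmetry assumption on the kernel. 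You instead take $\hat{p}_{\mathcal{R}}(\mathbf{y}|\mathbf{x}) = \sum_s \alpha_s(\mathbf{x})\,\delta_{\mathbf{v}_s}(\mathbf{y})$ with $\alpha_s$ the softmax weights, which makes the identity $\mathbb{E}_{\hat{p}_{\mathcal{R}}}[f(\mathbf{y},\mathbf{a})] = \sum_s \alpha_s(\mathbf{x}) f(\mathbf{v}_s,\mathbf{a}) = g(\mathbf{x},\mathbf{a})$ a one-line computation needing no assumptions on $f$ or on any $\mathbf{y}$-kernel. Your version is the zero-bandwidth limit of the paper's, and it is arguably tighter: the paper's collapse step $\int \mathcal{R}_{\mathbf{y}}(\mathbf{y}_s,\mathbf{y}) f(\mathbf{y},\mathbf{a})\,\mathrm{d}\mathbf{y} = f(\mathbf{y}_s,\mathbf{a})$ holds exactly only for affine $f$ under a symmetric normalized kernel (for general nonlinear $f$ the smoothed expectation differs from the point evaluation), whereas with Dirac masses the identity is exact for arbitrary $f$. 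What the paper's construction buys is a genuine density (rather than a singular measure) as the ``parameterization restriction,'' and a closer formal link to the CME/kernel machinery of Eq.~\ref{eq:cme}; what yours buys is rigor and economy, plus the explicit observation --- which the paper leaves implicit --- that the softmax constrains the CME weights $\beta_s$ to the probability simplex, making this a strict subfamily of the CME parameterization.
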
       
From Proposition \ref{prop:2}, it is evident that with the attention-based network architecture, we can guarantee that our learned expected function resides within the true model class. To speed up the training procedure, one can initialize the value embeddings of the attention points with randomly selected labels from the training dataset. This approach provides a reasonable starting point for the model and reduces the time it takes for the model to converge to a solution. The training procedure of \ours is given in Algorithm~\ref{alg:training}.

\noindent\textbf{Remark.} Our proposed attention-based network architecture represents a parameterization of $p(\mathbf{y}|\mathbf{x})$, drawing similarities with the two-stage model and DFL. Compared with the two-stage model, we learn the expected cost function to make \ours decision-aware. Compared with DFL, we do not have to solve the stochastic optimization problem during learning. As a result, we can adopt an expressive nonparametric distribution with CME to parameterize $p(\mathbf{y}|\mathbf{x})$ to avoid the model mismatch error.

\subsection{Model Inference}
\label{sec:inference}

At test time, we can obtain the optimal decision by maximizing the learned expected cost $\argmin_{\mathbf{a}\in C}g(\mathbf{x},\mathbf{a})$. The final representation of $g(\mathbf{x}, \mathbf{a})$ is a weighted combination of $f(\mathbf{v}_s, \mathbf{a})$ with different value embeddings. Another benefit of the proposed attention-based network architecture is that it can preserve the convex property of the cost function.
\begin{proposition}
    As long as $f(\mathbf{y},\mathbf{a})$ is a convex function with respect to $\mathbf{a}$, $g(\mathbf{x},\mathbf{a})$ is a convex function with respect to $\mathbf{a}$.
    
Proof: This is a direct consequence of the theorem that a convex combination of convex functions remains a convex function
\end{proposition}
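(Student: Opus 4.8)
The plan is to unpack the definition of $g(\mathbf{x},\mathbf{a})$ in Eq.~\ref{eq:attention} and recognize it as a convex combination of the functions $\mathbf{a}\mapsto f(\mathbf{v}_s,\mathbf{a})$ whose mixing coefficients, for a fixed feature $\mathbf{x}$, are constants in $\mathbf{a}$. Concretely, write $w_s(\mathbf{x}) = \exp(\mathbf{q}(\mathbf{x})^\top\mathbf{k}_s/\sqrt{d})\big/\sum_{s'=1}^S\exp(\mathbf{q}(\mathbf{x})^\top\mathbf{k}_{s'}/\sqrt{d})$ for the softmax weights, so that $g(\mathbf{x},\mathbf{a}) = \sum_{s=1}^S w_s(\mathbf{x})\, f(\mathbf{v}_s,\mathbf{a})$. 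First I would record the two defining properties of the softmax output: $w_s(\mathbf{x})\ge 0$ for every $s$, and $\sum_{s=1}^S w_s(\mathbf{x}) = 1$. The point to emphasize is that these weights depend only on $\mathbf{x}$ (through the query and key embeddings) and not on the decision variable $\mathbf{a}$.

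Next I would fix an arbitrary $\mathbf{x}$, pick two decisions $\mathbf{a}_1,\mathbf{a}_2$ in the domain and $\lambda\in[0,1]$, and check the convexity inequality directly. By hypothesis each $f(\mathbf{v}_s,\cdot)$ is convex, so $f(\mathbf{v}_s,\lambda\mathbf{a}_1 + (1-\lambda)\mathbf{a}_2)\le \lambda f(\mathbf{v}_s,\mathbf{a}_1) + (1-\lambda) f(\mathbf{v}_s,\mathbf{a}_2)$. Multiplying by the nonnegative weight $w_s(\mathbf{x})$ preserves the inequality, and summing over $s$ (legitimate because the weights are independent of $\mathbf{a}$ and sum to one) yields $g(\mathbf{x},\lambda\mathbf{a}_1 + (1-\lambda)\mathbf{a}_2)\le \lambda g(\mathbf{x},\mathbf{a}_1) + (1-\lambda) g(\mathbf{x},\mathbf{a}_2)$, which is precisely convexity of $g(\mathbf{x},\cdot)$. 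Equivalently, one can simply invoke the standard fact that a nonnegative linear combination of convex functions is convex.

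I do not anticipate a genuine obstacle here; the only subtlety worth flagging is that the attention weights $w_s(\mathbf{x})$ must be treated as constants once $\mathbf{x}$ is fixed — if they depended on $\mathbf{a}$ the argument would break — and that the claim is to be read pointwise in $\mathbf{x}$. It is also worth remarking that no assumption on the geometry of $C$ enters: convexity of $g(\mathbf{x},\cdot)$ holds on whatever domain the $f(\mathbf{v}_s,\cdot)$ are convex, and in particular the inference step $\argmin_{\mathbf{a}\in C} g(\mathbf{x},\mathbf{a})$ becomes a convex program whenever $C$ is convex.
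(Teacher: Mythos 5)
Your proof is correct and follows exactly the route the paper takes: the paper's one-line proof invokes the fact that a convex combination of convex functions is convex, and you simply spell out why $g(\mathbf{x},\cdot)=\sum_s w_s(\mathbf{x})f(\mathbf{v}_s,\cdot)$ is such a combination (softmax weights nonnegative, summing to one, independent of $\mathbf{a}$) and verify the convexity inequality. No discrepancy to report.
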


When the original objective is convex, we can use any existing black-box convex solver \citep{diamond2016cvxpy, agrawal2018rewriting, gurobi}. For non-convex problem, we can use projected gradient descent.


Although Eq.~\ref{eq:objective} involves sampling \(\mathbf{x}, \mathbf{y}\) during training, this introduces generalization error due to the finite size of the training dataset. Crucially, this generalization error is distinct from the SAA error, which arises in existing methods that require sampling from a predicted distribution (e.g., a Gaussian with learned parameters) to estimate an expected objective. In such cases, the generalization error in the predictive model is further compounded by the additional variance introduced through sampling, resulting in compounded inaccuracies.

In contrast, our method learns the expected objective \(g(\mathbf{x}, \mathbf{a})\) directly and does not require sampling at inference time, thereby eliminating the additional SAA error. Nonetheless, like all learning-based methods, it remains subject to generalization error stemming from limited training data.

\section{Additional Related Work}
SO-EBM \citep{kongend} proposes a surrogate learning objective by maximizing the likelihood of the pre-computed optimal decision within an energy-based probability parameterization. LODL \citep{shah2022learning, shah2023leaving} and LANCER~\citep{zharmagambetov2023landscape}  approximate the decision-focused loss with a quadratic function or a neural network. \ours is different from them:  (1) They assume a deterministic setting while we assume the problem parameter $\mathbf{y}$ is a probability distribution.
(2) They approximate the decision loss which is a function of the problem parameter $\mathbf{y}$. In contrast,
\ours directly learns the expected cost function which remains independent of $\mathbf{y}$. (3) They still relies on initially learning a forecaster to infer $\mathbf{y}$ from $\mathbf{x}$. Consequently, they remain susceptible to both model mismatch error and sample average approximation error in our probabilistic setting.  Recently, \citet{bansal2023taskmet} proposes TaskMet with the motivation to simultaneously optimize predictive loss and decision loss, rather than addressing the three bottlenecks.

Several other works have focused on linear objectives, where DFL through KKT condition may encounter singular value issues. To address this, the SPO+ loss \citep{elmachtoub2022smart} evaluates prediction errors relative to optimization objectives using the subgradient method. The approach by \citet{wilder2019melding} incorporates a quadratic regularization term for smoothing. Meanwhile, \citet{mandi2020interior} introduces a log barrier regularizer and differentiates through the homogeneous self-dual embedding. In contrast, our method is crafted for a broader range of objectives.

When the optimization problem is discrete, differentiating through the optimization layer is even more challenging since the gradient is ill-defined in the discrete domain. Various solutions have been proposed, such as tackling the discrete challenge via interpolation \citep{poganvcic2019differentiation}, perturbation \citep{niepert2021implicit, berthet2020learning}, subgradient methods \citep{mandi2020smart}, and cutting planes \citep{ferber2020mipaal}. Our method is directly applicable to the discrete setting and we leave it for future exploration.


\begin{figure*}[!t]
    \centering
    \includegraphics[width=0.9\textwidth]{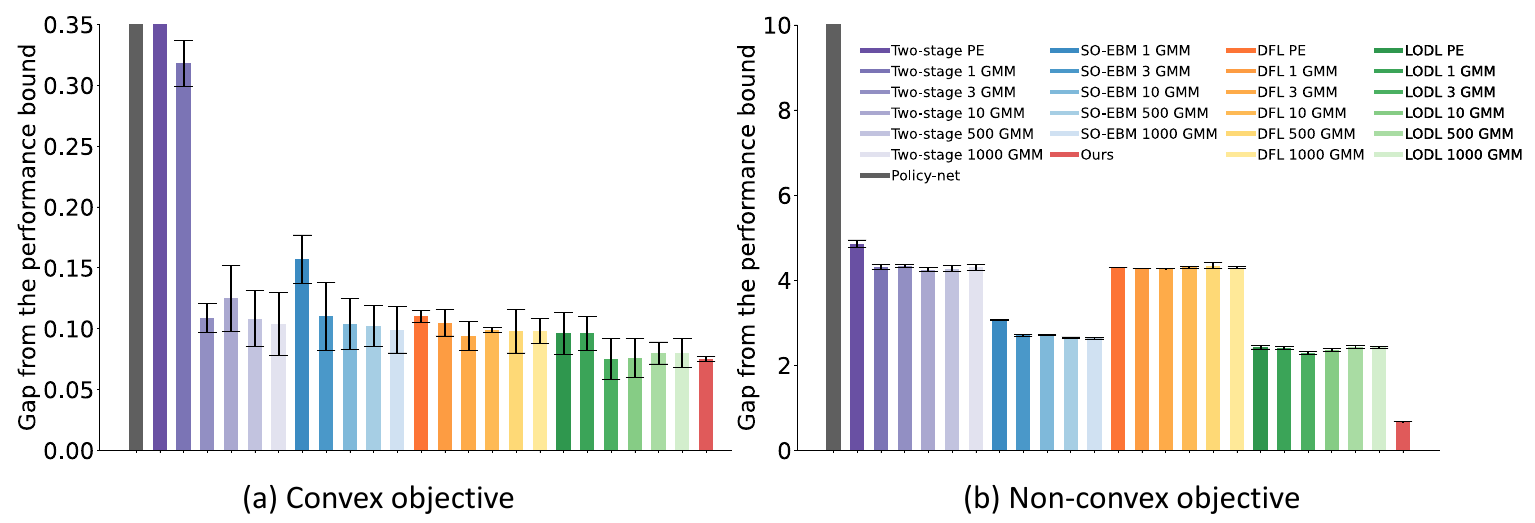}
    \caption{The gap of the model's decision regret from the lower bound of the decision regret on the synthetic data for both the convex and non-convex objectives. `PE' denotes that the forecaster only produces a point estimate for the problem parameter.}
    \label{fig:synthetic}
\end{figure*}

\begin{figure*}[!ht]
    \centering
    \includegraphics[width=0.95\textwidth]{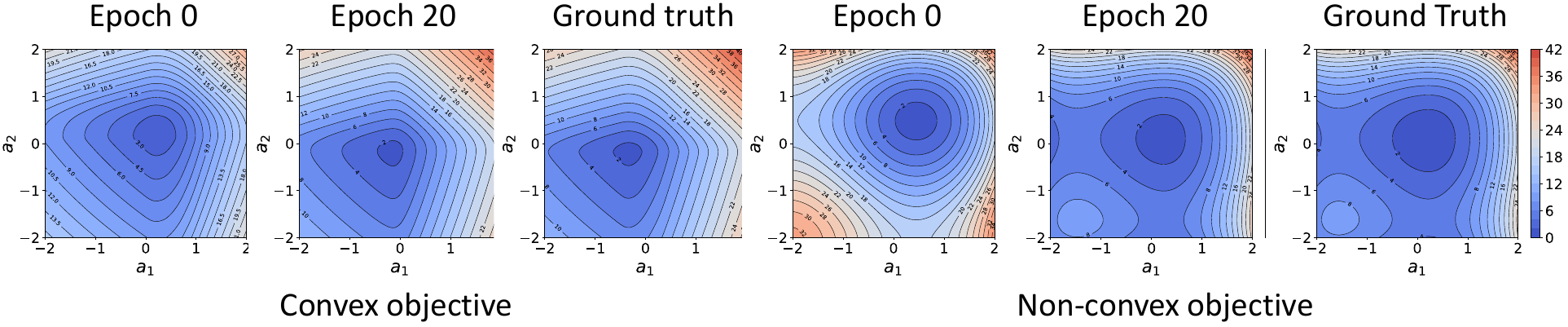}
    \caption{Randomly initialized landscape, \ours recovered landscape and the ground-truth landscape on the synthetic data. The landscape is conditioned on an input feature sampled from the test set.}
    \label{fig:landscape}
\end{figure*}


\section{Experiments}
In this section, we empirically evaluate \ours and conduct experiments in both synthetic and real-world scenarios. Finally, we perform ablation studies to show the effect of each model design in \ours.
\subsection{Synthetic Problems}
To highlight the ability to learn the true expected objective, we first validate our method on a synthetic dataset where the true underlying model is known to us. To simulate the multi-modal scenario in the real world, we generate 5000 feature-parameter pairs using a Gaussian mixture model with three components (3 GMM). We consider both convex and non-convex objectives. The details of the data generation process and the objectives are provided in the Appendix~\ref{s:synthetic}.

\noindent\textbf{Experimental Setup.}

Since we know the true underlying data generation process for this synthetic setting, we compute the lower bound of the decision regret and use the gap of the model's decision regret from this lower bound as the evaluation metric. We compare with the following baselines: (1) A two-stage model trained with negative log-likelihood. (2) DFL~\citep{donti2017task}. 
(3) SO-EBM \citep{kongend}: It uses the energy-based model as a surrogate objective to speed up DFL. (4) Policy-net: It directly maps from the input features to the decision variables by minimizing the task loss using supervised learning. (5) LODL \citep{shah2022learning}: it approximates the decision loss with a surrogate function. 

For the two-stage model, DFL, SO-EBM and LODL, the forecasters use GMM with a different number of components and use 100 samples to estimate the expectation of the objective as we found that more samples bring limited performance gain but lead to longer training time. We also evaluate scenarios where the forecaster provides only a point estimate of the problem parameter, with the exception of SO-EBM, which is originally used in the probabilistic setting. For a fair comparison, we use the same backbone for the encoder of \ours and the forecaster of the baselines and 1000 attention points for both the convex and non-convex objectives. For the two-stage model, DFL and SO-EBM, the forecasters use GMM with a different number of components and use 100 samples to estimate the expectation of the objective as we found that more samples bring little performance gain. For a fair comparison, we use the same backbone for the encoder of \ours and the forecaster of the baselines and 1000 attention points for both the convex and non-convex objectives. Appendix~\ref{s:experiment} provides more details of the experimental setup and model parameters.

\noindent\textbf{Results.}
Fig.~\ref{fig:synthetic} shows the results on both the convex and non-convex objective for all the methods. As we can see, \ours can outperform all the baselines. The improvement of \ours against the baselines becomes more significant on the non-convex objective. Specifically, \ours reduces the gap from the performance bound by $56.5\%$ compared with the strongest baseline LODL. When the baseline methods utilized GMMs with a different number of components, their performance deteriorated, indicating that they suffer from model mismatch errors. In contrast, our method consistently outperformed the baselines, regardless of the number of components they used. This consistency is evidence that our approach can effectively mitigate model mismatch errors. Even when the baseline methods were aligned with the ground-truth model class, our method still outperformed them since we can also avoid the sampling average approximation error at test time.

It's important to note that when the forecaster yields only a point estimate, both existing DFL frameworks and the two-stage method show the worst performance for this imbalanced cost function. This underscores the importance of quantifying uncertainty in the forecaster's predictions, especially in risk-sensitive domains.


Fig.~\ref{fig:landscape}  visualizes the learned expected function and the ground truth expectation on a test sample for both objectives. We found that the \ours can effectively recover the landscape of the ground truth expected cost.

\begin{table}[t]
\centering
\scriptsize
\setlength{\tabcolsep}{0.3em}
\renewcommand{\arraystretch}{0.95}
\resizebox{0.48\textwidth}{!}{%
\begin{tabular}{l|ccc}
\toprule
& \multicolumn{3}{c}{\textbf{Decision Regret}} \\
\cmidrule(lr){2-4}
Method & Power Bidding & Inventory Opt. & Vaccine Dist. \\
\midrule
Policy-net & 489.01 \std{12.39} & 3.96 \std{0.28} & 604 \std{12.30} \\
Two-stage PE & 518.19 \std{14.84} & 3.97 \std{0.15} & 573 \std{10.26} \\
Two-stage 1-GMM & 69.36 \std{4.33} & 3.32 \std{0.10} & 538 \std{9.30} \\
Two-stage 3-GMM & 69.89 \std{1.50} & 3.27 \std{0.08} & 534 \std{8.40} \\
Two-stage 10-GMM & 70.51 \std{2.29} & 3.29 \std{0.05} & 533 \std{7.95} \\
Two-stage 500-GMM & 66.84 \std{1.43} & 3.24 \std{0.07} & 524 \std{7.95} \\
Two-stage 1000-GMM & 65.83 \std{1.70} & 3.27 \std{0.05} & 527 \std{7.65} \\
SO-EBM 1-GMM & 67.32 \std{1.97} & 3.37 \std{0.02} & 512 \std{8.55} \\
SO-EBM 10-GMM & 66.93 \std{2.45} & 3.26 \std{0.03} & 513 \std{7.95} \\
SO-EBM 500-GMM & 67.02 \std{2.16} & 3.37 \std{0.05} & 513 \std{8.70} \\
SO-EBM 1000-GMM & 66.40 \std{2.23} & 3.21 \std{0.07} & 516 \std{9.45} \\
DFL PE & 69.46 \std{1.21} & 3.35 \std{0.03} & 519 \std{7.37} \\
DFL 1-GMM & 66.85 \std{1.47} & 3.36 \std{0.02} & 515 \std{8.25} \\
DFL 3-GMM & 66.60 \std{3.23} & 3.36 \std{0.05} & 513 \std{7.05} \\
DFL 10-GMM & 66.45 \std{2.32} & 3.31 \std{0.01} & 513 \std{7.65} \\
DFL 500-GMM & 65.06 \std{0.88} & 3.24 \std{0.09} & 507 \std{6.60} \\
DFL 1000-GMM & 64.65 \std{3.70} & 3.21 \std{0.07} & 513 \std{7.35} \\
LODL PE & 67.92 \std{1.49} & 3.36 \std{0.06} & 512 \std{7.01} \\
LODL 1-GMM & 66.87 \std{1.36} & 3.34 \std{0.01} & 508 \std{6.23} \\
LODL 3-GMM & 65.75 \std{1.86} & 3.31 \std{0.06} & 506 \std{6.84} \\
LODL 10-GMM & 65.29 \std{1.23} & 3.26 \std{0.02} & 504 \std{6.38} \\
LODL 500-GMM & 64.24 \std{1.45} & 3.22 \std{0.05} & 502 \std{7.02} \\
LODL 1000-GMM & 64.13 \std{2.47} & 3.24 \std{0.04} & 503 \std{7.01} \\
\rowcolor{mygray}
Ours & $\bm{60.90}$ \std{0.60} & $\bm{3.09}$ \std{0.09} & $\bm{492}$ \std{7.05} \\
\bottomrule
\end{tabular}
}
\caption{Decision regret of each method -- \textbf{lower is better}. `PE' denotes point estimate for the parameter.}
\label{table:main}
\end{table}

\subsection{Real-World Problems}
Next, we delve into three real-world problems encompassing both convex and non-convex objectives.

\subsubsection{Experimental Setup.}
\textbf{Wind Power Bidding.}
In this task, a wind power firm engages in both energy and reserve markets,  given the generated wind power $\mathbf{x}\in \mathbb{R}^{24}$ in the last 24 hours. The firm needs to decide the energy quantity $\mathbf{a}_E \in \mathbb{R}^{12}$ to bid and quantity $\mathbf{a}_R \in \mathbb{R}^{12}$ to reserve over the next 12-24 hours in advance, based on the forecasted wind power $\mathbf{y}\in \mathbb{R}^{12}$. The optimization objective is a piecewise function consisting of three segments \citep{manasssakan2022,di2020bidding}, which is to maximize the revenue of the energy sales while minimizing the penalties for decision inaccuracies of overbidding and underbidding.

\textbf{Inventory Optimization.}
In this task, a department store is tasked with predicting the sales $\mathbf{y} \in \mathbb{R}^7$ for the upcoming 7th-14th days based on the past 14 days' sales data $\mathbf{x} \in \mathbb{R}^{14}$ for a specific product, and accordingly, determining the best replenishment strategy $\mathbf{a} \in \mathbb{R}^7$ for each day. The optimization objective is a combination of an under-purchasing penalty, an over-purchasing penalty, and a squared loss between supplies and demands. 


\textbf{Vaccine Distribution for COVID-19.}
During the COVID-19 pandemic, computing a vaccine distribution strategy is one of the most challenging problems for epidemiologists and policymakers. In practice, meta-population Ordinary Differential Equations (ODEs) based epidemiological models~\citep{pei2020differential} are widely used to predict and evaluate the outcomes of different vaccine distribution strategies. These models rely on people mobility data, such as Origin-Destination (OD) matrices, to capture the pandemic spread dynamics across diverse locations~\citep{li2020substantial}.
In this task, given the OD matrices $\mathbf{x}\in \mathbb{R}^{47\times 47 \times 7}$ of last week, \ie~ $\mathbf{x}[i,j,t]$ represents the number of people move from region $i$ to $j$ on day $t$, we need to decide the vaccine distribution $\mathbf{a} \in \mathbb{R}^{47}$ across the 47 regions in Japan with a budget constraint ($\mathbf{a}[i]$ is the number of vaccines distributed to the region $i$). 
 The optimization objective is to 
minimize the total number of infected people over the ODE-drived dynamics, based on the forecasted OD matrices $\mathbf{y}\in \mathbb{R}^{47 \times 47 \times 7}$ for the next week. This task is a challenging non-convex optimization problem due to the nonlinear simulation model. 

Due to space limit, we provide more details of the experimental setup and the optimization objectives in Appendix~\ref{s:experiment}.

\subsubsection{Results.} Table~\ref{table:main} presents the decision regret across three real-world problems, demonstrating that our method consistently outperforms all baselines. Specifically, \ours improves decision regret by \(\{5.0\%, 3.7\%, 2.0\%\}\) compared to the strongest baseline. These three forecasting tasks are characterized by high uncertainty, making it challenging to formulate a precise model assumption. In such scenarios, it is more effective to derive the expected cost function directly from the data, eliminating the need for any parametric distribution assumptions. Moreover, it is clear that simply increasing the number of components in the GMM does not significantly enhance DFL's performance due to increased sample approximation errors. Finally, the probabilistic approach generally exhibits higher and more reliable performance than methods that rely solely on learning a point estimate forecaster.

\subsection{Ablation Study}

\begin{figure*}[!ht]
    \centering
    \includegraphics[width=0.9\textwidth]{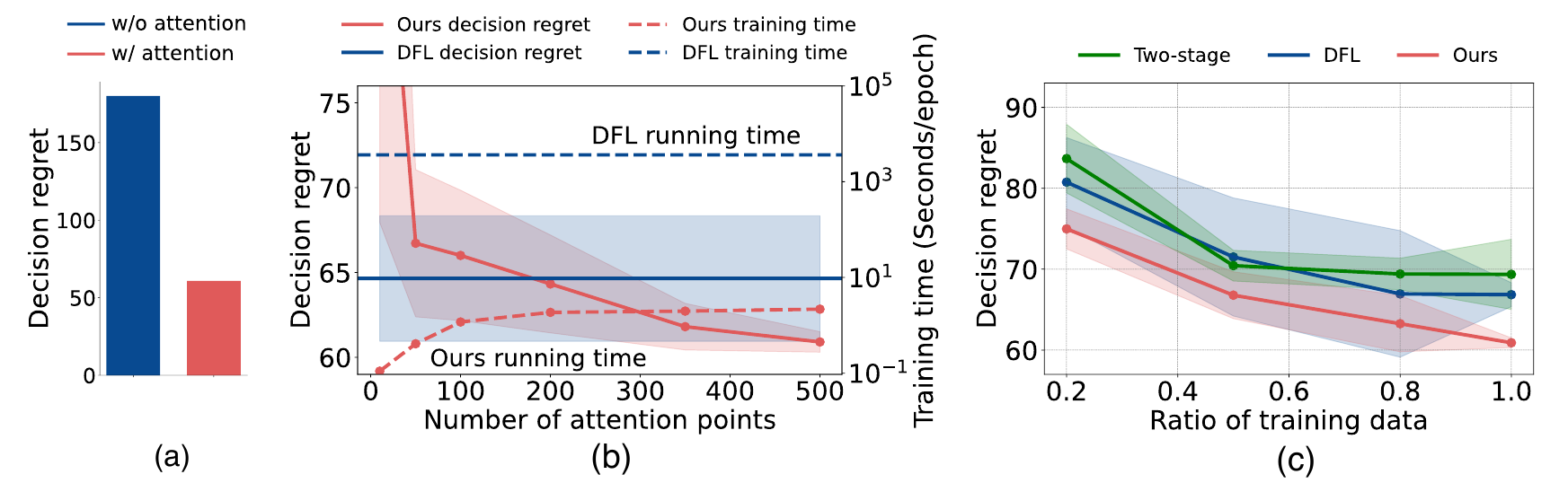}
    \caption{Ablation study on the impact of attention-based architecture, number of attention points, and training data size on the wind power bidding problem.}
    \label{fig:ablation}
\end{figure*}

\begin{figure}[h]
    \centering
    \includegraphics[width=0.5\textwidth]{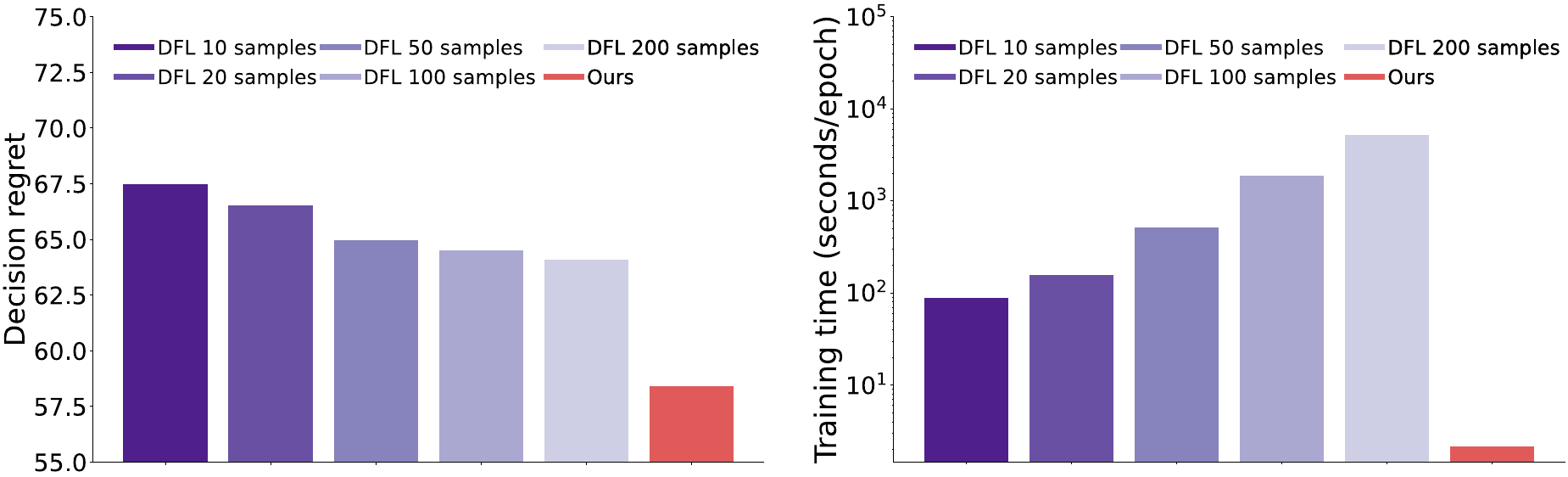}
    \caption{\ours vs. DFL with different numbers of samples: the left figure shows decision regret, while the right figure displays training time.}
    \label{fig:ab1}
\end{figure}

\begin{figure}[h]
    \centering
    \includegraphics[width=0.45\textwidth]{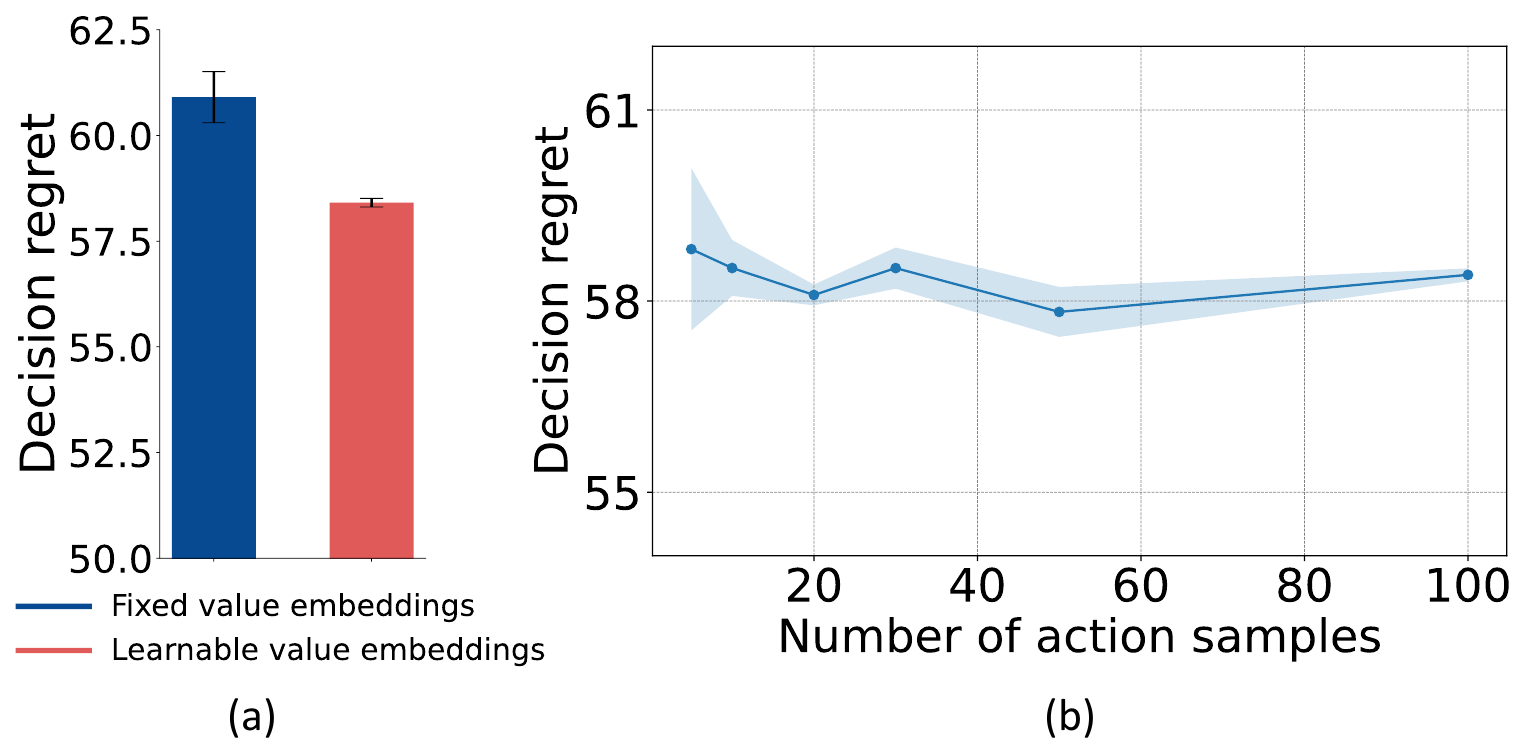}
    \caption{Impact of learnable value embeddings and number of action samples.}
    \label{fig:ab23}
\end{figure}

In this subsection, we investigate each component of \ours via ablation studies on the wind power bidding. 

\emph{Impact of attention-based architecture.} Without the attention-based network architecture, we see a significant performance drop in Fig.~\ref{fig:ablation}(a). This is because, without the attention architecture, the network architecture may not be within the true model class and  thus suffer from high bias error in Proposition~\ref{prop:1}. 

\emph{Impact of number of attention points.} Our model performance can be improved with more attention points as in Fig.\ref{fig:ablation}(b). We also plot the decision regret and training time of DFL. We find that when the number of attention points is over 200, \ours can outperform DFL in terms of the decision regret while being orders of magnitude faster. 

\emph{Impact of training data size.} Our method outperforms baselines constantly with different ratios of training data as shown in Fig.~\ref{fig:ablation}(c). The superior performance is because we use attention-based network architecture to mimic the distribution-based parameterization. Compared with the two-stage model, we are decision-aware; compared with DFL methods, we mitigate the three bottlenecks.

\emph{\ours vs DFL with different number of samples.} The number of samples used to estimate the expected objective in DFL is an important hyperparameter. To investigate its impact, we compare the decision regret and training time of \ours with DFL using different numbers of samples. We use GMM with 1000 components in the DFL forecaster as it achieves the best performance shown in Section 5.2.
As shown in Fig.~\ref{fig:ab1}, when the number of samples for DFL exceeds 100, the performance improvement becomes very marginal (64.52 with 100 samples vs. 64.07 with 200 samples). However, the training time increases significantly (1878 seconds/epoch with 100 samples vs. 5251 seconds/epoch with 200 samples). In contrast, \ours achieves significantly better decision regret (58.41) while being orders of magnitude faster (2.17 seconds/epoch).

\emph{Impact of learnable value embeddings.} In \ours, the value embeddings are initialized with randomly sampled labels from the training set and then updated during the training process. An alternative is to directly use these randomly selected labels and keep the value embeddings fixed during the training process.
We examine whether making the value embeddings learnable improves the performance. The results are shown in Fig.~\ref{fig:ab23}(a). As we can see, with learnable value embeddings, the decision regret of \ours decreases significantly compared with the fixed value embeddings.

 \emph{Impact of number of action samples.} In \ours, we need to sample actions for each $(\mathbf{x},\mathbf{y})$ pair at each training iteration to fit the function.  In this study, we investigate the influence of the number of action samples on the performance. As shown in Fig.\ref{fig:ab23}(b), the decision regret remains stable even for a sample size of 5. Notably, as the number of action samples increases, the variance of the decision regret across different random seeds decreases, indicating improved stability in the results.

\section{Conclusion and Limitations}
We focus on mitigating the three bottlenecks of DFL by differentiating through KKT conditions under the probabilistic setting: (1) model mismatch error, (2) sample average approximation error, and (3) gradient approximation error. To this end, we propose \ours -- the first distribution-free DFL method which does not require any model assumption. \ours adopts a distribution-free training objective that directly learns the expected cost function from the data. To reduce the bias error, we design an attention-based network architecture, drawing inspiration from the distribution-based parameterization of the expected cost function. Empirically, we demonstrate that \ours is effective in a wide range of stochastic optimization problems with either convex or non-convex objectives.

\emph{Limitations.} In our work, we focus on the probabilistic setting where the predictive distribution of the forecasting task has high uncertainty. In this setting, both model mismatch error and sample average approximation error are significant. However, if the forecasting task is relatively straightforward, a simple Gaussian distribution might suffice. For certain objective functions, the expectation under a Gaussian distribution has a closed-form expression. In such cases, existing model-based DFL methods may still be a better choice.

When the number of attention points is large, scalability may become an issue at inference time. This challenge can potentially be alleviated by employing fast attention mechanisms, such as sparse attention (\eg, Longformer; \citep{beltagy2020longformer}) or low-rank approximations (\eg, Linformer; \citep{wang2020linformer}).


\section*{Acknowledgments}

This work was supported in part by the following grants and awards:

\begin{itemize}
  \item NSF IIS-2008334, IIS-2106961, IIS-2403240, CAREER IIS-2028586, CAREER IIS-2144338, RAPID IIS-2027862, Medium IIS-1955883, Medium IIS-2106961, Medium IIS-2403240, Expeditions CCF-1918770, PIPP CCF-2200269, ECCS-2401391
  \item ONR N000142512173
  \item Centers for Disease Control and Prevention Modeling Infectious Diseases In Healthcare program
  \item Dolby faculty research award
  \item Meta faculty gift, and funds/computing resources from Georgia Tech and GTRI.
  \item PPP DA 2025 Flash Funding.
\end{itemize}


\bibliographystyle{plainnat}
\bibliography{refs}
\newpage

\onecolumn

\title{Appendix for \ours: Distribution-Free Decision-Focused Learning}
\maketitle

\appendix

\startcontents[sections]
\printcontents[sections]{l}{1}{\setcounter{tocdepth}{2}}



\section{Training Algorithm}



The full training procedure of \ours is given in Algorithm~\ref{alg:training}.

\begin{algorithm}[H]
\caption{Training Procedure of \ours}
\label{alg:training}
\begin{algorithmic}[1]
\REQUIRE Objective function $f$, feasible set $\mathcal{C}$, training dataset $\mathcal{D}$
\ENSURE Learned encoder, value embeddings, and key embeddings

\STATE Initialize encoder, value embeddings, and key embeddings
\FOR{$t = 1$ to $T$}
    \STATE Sample a mini-batch $B = \{(\mathbf{x}_i, \mathbf{y}_i)\}_{i=1}^{|B|}$ from $\mathcal{D}$
    \FOR{each $(\mathbf{x}_i, \mathbf{y}_i)$ in $B$}
        \STATE Sample actions $\{\mathbf{a}_i^j\}_{j=1}^J$ from the feasible set $C$
        \STATE Compute $g(\mathbf{x}_i, \mathbf{a}_i^j)$ for all $j$, as defined in Eq.~\ref{eq:attention}.
        \STATE Compute the MSE loss for the $i$-th sample:
        \[
        L_i = \frac{1}{J} \sum_{j=1}^J \left( f(\mathbf{x}_i, \mathbf{y}_i) - g(\mathbf{x}_i, \mathbf{a}_i^j) \right)^2
        \]
    \ENDFOR
    \STATE Update encoder, value embeddings, and key embeddings using the aggregated loss $\sum_i L_i$
\ENDFOR
\end{algorithmic}
\end{algorithm}

\section{Constrained Sampling}
\label{s:sampling}

In practice, it's unnecessary to fit the true objective across the entire Euclidean space. Instead, we only need to sample from the constrained space $C$. There are several strategies for this. First, we can employ Markov chain Monte Carlo (MCMC) methods to uniformly sample within $C$, such as Ball Walk~\citep{lovasz1990mixing} and the hit-and-run algorithm ~\citep{belisle1993hit, lovasz1999hit}. Alternatively. we can sample from a relaxed constrained space, such as the encompassing outer box of the original constrained space. This allows us to sample each dimension of $\mathbf{a}$ independently from a uniform distribution. Figure~\ref{fig:sampling} gives an illustration.
\begin{figure}
  \centering
  \begin{center}
    \includegraphics[width=0.25\textwidth]{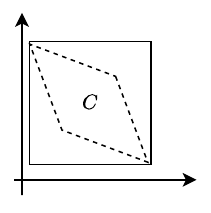}
  \end{center}
  \caption{Relaxed constrained sampling. We can sample from the encompassing outer box of the original constrained space. }
  \label{fig:sampling}
\end{figure}

Consider the following convex constraints:
\begin{align}
\mathbf{A}\mathbf{a} = \mathbf{b}, \quad \mathbf{G}\mathbf{a} \preceq \mathbf{h}.
\end{align}

In particular, instead of directly sampling the full-dimensional decision vector $\mathbf{a}$, we initially output a subset of the variables ${a_1, \cdots, a_d }$, and then deduce the remaining variables by resolving the given set of equations.

To sample from the relaxed constraint space, we initially determine the maximum and minimum values for ${a_1, \cdots, a_d }$, guided by the given inequality constraints. These boundary points can be effortlessly acquired utilizing the Python SciPy package. Following this, we execute uniform sampling between these extremal values for each variable in the set ${a_1, \cdots, a_d }$. Essentially, we transform the polyhedron into a box, simplifying the uniform sampling process.

Furthermore, many predict-then-optimize problems manifest as resource allocation issues wherein the decision variable $\mathbf{a}$ embodies a simplex; in such cases, we can directly sample from the Dirichlet distribution.



\section{Additional Background on Conditional Mean Embedding}
\label{s:Background}

\begin{table}[t]
\centering
\begin{tabular}{c c c}
\toprule[1.5pt]
Variable & $\mathbf{x}$ & $\mathbf{y}$ \\
Domain & $\mathcal{X}$ & $\mathcal{Y}$ \\
Kernel & $\mathcal{R}_{\mathbf{x}}(\mathbf{x},\mathbf{x}')$ & $\mathcal{R}_{\mathbf{y}}(\mathbf{y},\mathbf{y}')$ \\
Feature map & $\phi(\mathbf{x})$/$\mathcal{R}_{\mathbf{x}}(\mathbf{x},\cdot)$ & $\varphi(\mathbf{y})$/$\mathcal{R}_{\mathbf{y}}(\mathbf{y},\cdot)$ \\
Feature matrix & $\Upsilon=(\phi(\mathbf{x}_1), \cdots, \phi(\mathbf{x}_s))$ & $\Phi=(\varphi(\mathbf{y}_1), \cdots, \varphi(\mathbf{y}_s))$ \\
Kernel matrix & $\mathbf{K}=\Upsilon^{\top}\Upsilon$ & $\mathbf{L}=\Phi^{\top}\Phi$ \\
RKHS & $\mathcal{G}$ & $\mathcal{F}$ \\
\bottomrule[1.5pt]
\end{tabular}
\caption{Table of Notations}
\label{table:notation:appendix}
\end{table}

We provide more details about how to compute conditional mean embedding (CME) in this subsection. Table~\ref{table:notation:appendix} presents the notations related to CME.



Let $\mathcal{F}$ be a reproducing kernel Hilbert space (RKHS) over the domain of $\mathbf{y}$ with kernel function $\mathcal{R}_{\mathbf{y}}(\mathbf{y},\mathbf{y}')$ and inner product $\langle \cdot, \cdot \rangle_{\mathcal{F}}$. Its inner product $\langle \cdot, \cdot\rangle_{\mathcal{F}}$ satisfies the reproducing property: $$\langle f(\cdot), \mathcal{R}_{\mathbf{y}}(\mathbf{y}, \cdot) \rangle_{\mathcal{F}}=f(\mathbf{y}),$$
meaning that we can view the evaluation of a function $f\in \mathcal{F}$ at any point $\mathbf{y}$ as an inner product and the linear evaluation operator is given by $\mathcal{R}_{\mathbf{y}}(\mathbf{y},\cdot)$, \ie~the kernel function. Alternatively, $\mathcal{R}_{\mathbf{y}}(\mathbf{y}, \cdot)$ can also be viewed as a feature map $\varphi(\mathbf{y})$ where $\mathcal{R}_{\mathbf{y}}(\mathbf{y},\mathbf{y}')=\langle \varphi(\mathbf{y}), \varphi(\mathbf{y}')\rangle_{\mathcal{F}}$.
Similarly, we can define the RKHS $\mathcal{G}$ over the domain of $\mathbf{x}$ with kernel function $\mathcal{R}_{\mathbf{x}}(\mathbf{x},\mathbf{x}')$.

For a particular $\mathbf{a}$, we denote the corresponding function with respect to $\mathbf{y}$ as $f_\mathbf{a}(\mathbf{y})$. CME projects the conditional distribution to its expected feature map $\mu_{\mathbf{y}|\mathbf{x}}\triangleq \mathbb{E}_{p(\mathbf{y}|\mathbf{x})}[\mathcal{R}_{\mathbf{y}}(\mathbf{y}, \cdot)]$ and evaluates 
the conditional expectation of any RKHS function, $f_\mathbf{a} \in \mathcal{F}$, as an inner product in $\mathcal{F}$ using the reproducing property:
\begin{align}
\mathbb{E}_{p(\mathbf{y}|\mathbf{x})}[f_{\mathbf{a}}] &=\int p(\mathbf{y}|\mathbf{x})\langle \mathcal{R}_{\mathbf{y}}(\mathbf{y},\cdot), f_{\mathbf{a}} \rangle_{\mathcal{F}}d\mathbf{y} \nonumber\\&= \left\langle \int p(\mathbf{y}|\mathbf{x})\mathcal{R}_{\mathbf{y}}(\mathbf{y}, \cdot)\mathrm{d}\mathbf{y}, f_{\mathbf{a}} \right\rangle_{\mathcal{F}} \nonumber\\&=\langle \mu_{\mathbf{y}|\mathbf{x}}, f_{\mathbf{a}}\rangle_{\mathcal{F}}.
\end{align}

Assume that for all $f_{\mathbf{a}}\in \mathcal{F}$, the conditional expectation $\mathbb{E}_{p(\mathbf{y}|\mathbf{x})}[f_{\mathbf{a}}(\mathbf{y})]$ is an element of $\mathcal{G}$, the conditional embedding can be estimated with a finite dataset $\{\mathbf{x}_s, \mathbf{y}_s\}_{s=1}^S$ as \citet{song2013kernel, song2009hilbert}:
\begin{align}
    \hat{\mu}_{\mathbf{y}|\mathbf{x}} =  \Phi(\mathbf{K}+\lambda \mathbf{I})^{-1}\Upsilon^\top\phi(\mathbf{x}) =  \sum_{s=1}^S \beta_s(\mathbf{x}) \mathcal{R}_{\mathbf{y}}(\mathbf{y}_s,\cdot),
    \label{eq:cme_kernel}
\end{align}
where $\Phi=(\mathcal{R}_{\mathbf{y}}(\mathbf{y}_1,\cdot), \cdots, \mathcal{R}_{\mathbf{y}}(\mathbf{y}_S,\cdot))$ is the feature matrix; $\mathbf{K}=\Upsilon^\top \Upsilon$ is the Gram matrix for samples from variable $\mathbf{x}$ with $\Upsilon=(\mathcal{R}_{\mathbf{x}}(\mathbf{x}_1,\cdot), \cdots, \mathcal{R}_{\mathbf{x}}(\mathbf{x}_S,\cdot))$; 
$\lambda$ is the additional regularization parameter to avoid overfitting. Though the assumption $\mathbb{E}_{p(\mathbf{y}|\mathbf{x})}[f_{\mathbf{a}}(\mathbf{y})]\in \mathcal{G}$ is not necessarily true for continuous domains, existing works treat the expression as an approximation \citep{song2009hilbert} and works well in practice.

 
 One advantage of CME is that $\hat{\mu}_{\mathbf{y}|\mathbf{x}}$ can converge to $\mu_{\mathbf{y}|\mathbf{x}}$ in the RKHS norm at an overall rate of $\mathcal{O}(S^{-\frac{1}{2}})$ \citep{song2009hilbert}, which is independent of the input dimensions. This property let CME works well in the high-dimensional space. 

As we can see from Eq.~\ref{eq:cme_kernel}, the empirical estimator of CME, ${\hat \mu}_{\mathbf{y}|\mathbf{x}}$,  applies non-uniform
weights, $\beta_s$, on observations which are, in turn, determined by the conditioning variable $\mathbf{x}$.

\section{Proof of Proposition 1}
\label{s:Prop1}

\newtheorem{prop}{Proposition}

\begin{prop}\label{prop:1}
The expected MSE of the optimal solution $g^*$ on the test set is: 
\begin{align}
 \text{MSE}_{\rm test}  =  \underbrace{\mathbb{E}_{\mathcal{D}'}\left [ \left(g^*_{\mathcal{D}'}(\mathbf{x,\mathbf{a}})- \mathbb{E}_{p(\mathbf{y}|\mathbf{x})}[f(\mathbf{y},\mathbf{a})]   \right)^2 \right]}_{\text{Bias}}\nonumber\\ 
 + \underbrace{\mathbb{E}_{\mathcal{D}'}\left[ \left(g^*_{\mathcal{D}'}(\mathbf{x,\mathbf{a}})-  \mathbb{E}_{\mathcal{D}'}[g^*_{\mathcal{D}'}(\mathbf{x,\mathbf{a}})]  \right)^2 \right]}_{\text{Variance}}, \nonumber
\end{align} 
where $\mathcal{D}'$ denotes the training dataset  augmented with the sampled actions $\mathbf{a}$. 
\end{prop}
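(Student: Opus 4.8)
The plan is to recognize this as the standard bias–variance decomposition of mean squared error, applied here with two layers of randomness: the randomness in the augmented training set $\mathcal{D}'$ (both the $(\mathbf{x},\mathbf{y})$ pairs and the sampled actions $\mathbf{a}$), and the randomness of a fresh test point. First I would fix a test query $(\mathbf{x},\mathbf{a})$ and write the target as the conditional expectation $\bar{f}(\mathbf{x},\mathbf{a}) \triangleq \mathbb{E}_{p(\mathbf{y}|\mathbf{x})}[f(\mathbf{y},\mathbf{a})]$, noting that the true test label $f(\mathbf{y},\mathbf{a})$ with $\mathbf{y}\sim p(\mathbf{y}|\mathbf{x})$ satisfies $\mathbb{E}[f(\mathbf{y},\mathbf{a})] = \bar{f}(\mathbf{x},\mathbf{a})$. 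Here I should be slightly careful: the statement as written omits the irreducible noise term $\mathbb{E}[(f(\mathbf{y},\mathbf{a}) - \bar f(\mathbf{x},\mathbf{a}))^2]$, so I would either (a) define $\text{MSE}_{\rm test}$ as the error relative to the ground-truth conditional expectation $\bar f$ rather than relative to a noisy label, which makes the two stated terms exact, or (b) state the decomposition up to the additive noise term and remark that the paper's displayed identity holds for the reducible part. I would go with interpretation (a) since it matches the quantity the proposition actually writes.

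The key algebraic step is the classical add-and-subtract trick: insert $\mathbb{E}_{\mathcal{D}'}[g^*_{\mathcal{D}'}(\mathbf{x},\mathbf{a})]$ inside the squared error,
\[
g^*_{\mathcal{D}'}(\mathbf{x},\mathbf{a}) - \bar f(\mathbf{x},\mathbf{a}) = \bigl(g^*_{\mathcal{D}'}(\mathbf{x},\mathbf{a}) - \mathbb{E}_{\mathcal{D}'}[g^*_{\mathcal{D}'}(\mathbf{x},\mathbf{a})]\bigr) + \bigl(\mathbb{E}_{\mathcal{D}'}[g^*_{\mathcal{D}'}(\mathbf{x},\mathbf{a})] - \bar f(\mathbf{x},\mathbf{a})\bigr),
\]
then expand the square and take $\mathbb{E}_{\mathcal{D}'}$. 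The cross term vanishes because the second bracket is deterministic (a constant in $\mathcal{D}'$) while the first has mean zero under $\mathbb{E}_{\mathcal{D}'}$; this is where I would spell out the one line of justification. What remains is the variance term $\mathbb{E}_{\mathcal{D}'}[(g^*_{\mathcal{D}'} - \mathbb{E}_{\mathcal{D}'}[g^*_{\mathcal{D}'}])^2]$ plus the squared bias $(\mathbb{E}_{\mathcal{D}'}[g^*_{\mathcal{D}'}] - \bar f)^2$. I would then note that the paper writes the bias term as $\mathbb{E}_{\mathcal{D}'}[(g^*_{\mathcal{D}'} - \bar f)^2]$ rather than $(\mathbb{E}_{\mathcal{D}'}[g^*_{\mathcal{D}'}] - \bar f)^2$ — these are not equal in general, so I would flag that either the bias term in the statement is itself meant to be the total error of each trained model (in which case "Bias + Variance" over-counts) or, more charitably, the intended reading is that the first term is the squared bias and the notation is loose; I would present the clean version $\text{MSE}_{\rm test} = \text{Bias}^2 + \text{Variance}$ and match it to the proposition's labels. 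Finally I would take the outer expectation over the test distribution of $(\mathbf{x},\mathbf{a})$ so the identity holds for the averaged test MSE.

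The main obstacle is therefore not a hard inequality but a bookkeeping/definitional one: pinning down exactly what $\mathcal{D}'$, $g^*_{\mathcal{D}'}$, and $\text{MSE}_{\rm test}$ mean so that the displayed equation is literally true, handling the (possibly suppressed) irreducible-noise term, and reconciling the proposition's way of writing the bias term with the standard squared-bias expression. Once those conventions are fixed, the proof is the two-line add-subtract-and-expand argument above, with the cross-term cancellation being the only substantive computation. I would structure the write-up as: (1) set notation and define the target $\bar f$; (2) state the pointwise decomposition and verify the cross term is zero; (3) integrate over the test point and identify the two pieces with the Bias and Variance labels; (4) a remark clarifying the noise term and the reading of the bias term.
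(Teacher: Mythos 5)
Your proposal is correct and follows exactly the same route as the paper: define $\text{MSE}_{\rm test}=\mathbb{E}_{\mathcal{D}'}\bigl[(\mathbb{E}_{p(\mathbf{y}|\mathbf{x})}[f(\mathbf{y},\mathbf{a})]-g^*_{\mathcal{D}'}(\mathbf{x},\mathbf{a}))^2\bigr]$ relative to the ground-truth conditional expectation (your interpretation (a) is indeed the paper's definition, so no irreducible-noise term appears), add and subtract $\mathbb{E}_{\mathcal{D}'}[g^*_{\mathcal{D}'}(\mathbf{x},\mathbf{a})]$, expand, and show the cross term vanishes because one factor is deterministic in $\mathcal{D}'$ and the other is mean-zero.

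One remark worth keeping: the issue you flag with the bias term is real, and the paper's own proof commits it. In the paper's notation $a=\mathbb{E}_{p(\mathbf{y}|\mathbf{x})}[f(\mathbf{y},\mathbf{a})]-\mathbb{E}_{\mathcal{D}'}[g^*_{\mathcal{D}'}(\mathbf{x},\mathbf{a})]$ is a constant with respect to $\mathcal{D}'$, so $\mathbb{E}_{\mathcal{D}'}[a^2]=\bigl(\mathbb{E}_{\mathcal{D}'}[g^*_{\mathcal{D}'}(\mathbf{x},\mathbf{a})]-\mathbb{E}_{p(\mathbf{y}|\mathbf{x})}[f(\mathbf{y},\mathbf{a})]\bigr)^2$, the squared bias; yet the paper equates this with $\mathbb{E}_{\mathcal{D}'}\bigl[(g^*_{\mathcal{D}'}(\mathbf{x},\mathbf{a})-\mathbb{E}_{p(\mathbf{y}|\mathbf{x})}[f(\mathbf{y},\mathbf{a})])^2\bigr]$, which is the entire $\text{MSE}_{\rm test}$, so the displayed identity as written would read $\text{MSE}=\text{MSE}+\text{Variance}$. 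Your clean version, $\text{MSE}_{\rm test}=\text{Bias}^2+\text{Variance}$ with the bias term being the square of the expected deviation, is the correct statement, and your write-up is the one to keep.
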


\begin{proof}
The training set consists of the given $\mathcal{D}=\{\mathbf{x}_i,\mathbf{y}_i\}_{i=1}^N$ augmented with the sampled actions $\mathbf{a}$. We denote the augmented dataset as $\mathcal{D}'$. We assume the fitted function is in a hypothesis $g^*(\mathbf{x},\mathbf{a})$. Let $g^*_{\mathcal{D}'}(\mathbf{x},\mathbf{a})$ denote the function fitted on the dataset $\mathcal{D}'$.
The expectation of the mean squared error (MSE) for a given unseen test sample, over all possible learning sets, is:
\begin{align}
& \hspace{1.1em} \mathbb{E}_{\mathcal{D}'}[(\mathbb{E}_{p(\mathbf{y}|\mathbf{x})}[f(\mathbf{y},\mathbf{a})]-g^*_{\mathcal{D}'}(\mathbf{x},\mathbf{a}))^2]   \nonumber \\
& = \mathbb{E}_{\mathcal{D}'}[(\underbrace{\mathbb{E}_{p(\mathbf{y}|\mathbf{x}^*)}[f(\mathbf{y},\mathbf{a})] - \mathbb{E}_{\mathcal{D}'}[g^*_{\mathcal{D}'}(\mathbf{x,\mathbf{a}})]}_{a} \nonumber \\ &+ \underbrace{\mathbb{E}_{\mathcal{D}'}[g^*_{\mathcal{D}'}(\mathbf{x,\mathbf{a}})]-g^*_{\mathcal{D}'}(\mathbf{x,\mathbf{a}})}_{b} )^2 ] 
\nonumber \\ &= \mathbb{E}_{\mathcal{D}'}[(a+b)^2] \nonumber \\
&= \mathbb{E}_{\mathcal{D}'}[a^2]+\mathbb{E}_{\mathcal{D}'}[b^2] + \mathbb{E}_{\mathcal{D}'}[2ab] \nonumber
\end{align}

The first two terms represent the bias and variance errors respectively:
$$\mathbb{E}_{\mathcal{D}'}[a^2] = \mathbb{E}_{\mathcal{D}'}\left[ \left(g^*_{\mathcal{D}'}(\mathbf{x,\mathbf{a}})- \mathbb{E}_{p(\mathbf{y}|\mathbf{x})}[f(\mathbf{y},\mathbf{a})]   \right)^2 \right] =\text{Bias}^2(g^*).$$
$$\mathbb{E}_{\mathcal{D}'}[b^2] = \mathbb{E}_{\mathcal{D}'}\left[ \left(g^*_{\mathcal{D}'}(\mathbf{x,\mathbf{a}})-  \mathbb{E}_{\mathcal{D}'}[g^*_{\mathcal{D}'}(\mathbf{x,\mathbf{a}})]  \right)^2 \right] =\text{Variance}(g^*),$$


Next, we prove the cross-term $\mathbb{E}_{\mathcal{D}'}[2ab]=0$. To simplify the notation, let ${ \overline g}$ denote $\mathbb{E}_{\mathcal{D}'}[g^*_{\mathcal{D}'}(\mathbf{x,\mathbf{a}})]$; $g$  denote $g^*_{\mathcal{D}'}(\mathbf{x,\mathbf{a}})$; ${\tilde f}$ denote $\mathbb{E}_{p(\mathbf{y}|\mathbf{x})}[f(\mathbf{y},\mathbf{a})]$. Then we can obtain:
\begin{align}
&\mathbb{E}_{\mathcal{D}'}\left[2\left(g - {\overline g}\right)\left( {\overline g} - {\tilde f} \right)\right] \nonumber\\
&= 2\cdot\mathbb{E}_{\mathcal{D}'}[g \cdot {\overline g} - g \cdot {\tilde f} - {\overline g} \cdot {\overline g} + {\overline g} \cdot {\tilde f}] \nonumber \\
& = 2\cdot\mathbb{E}_{\mathcal{D}'}[g]\cdot {\overline g}- 2\cdot\mathbb{E}_{\mathcal{D}'}[g]\cdot{\tilde f} -2\cdot\mathbb{E}_{\mathcal{D}'}[{\overline g}^2] + 2\cdot{\tilde f}\cdot\mathbb{E}_{\mathcal{D}'}[{\overline g}] \nonumber \\
& = 2 \cdot {\overline g}^2 - 2\cdot {\overline g}\cdot{\tilde f} - 2\cdot{\overline g}^2 + 2\cdot{\tilde f}\cdot{\overline g} \nonumber \\
& = 0 \nonumber
\end{align}

Hence, the expectation of the MSE for a given test sample $\mathbf{x}^*$ is expressed as:
\begin{align}
 \text{MSE}_{\rm test} & = \mathbb{E}_{\mathcal{D}'}[(\mathbb{E}_{p(\mathbf{y}|\mathbf{x})}[f(\mathbf{y},\mathbf{a})]-g^*_{\mathcal{D}'}(\mathbf{x,\mathbf{a}}))^2] \nonumber \\
  & =  \underbrace{\mathbb{E}_{\mathcal{D}'}\left [ \left(g^*_{\mathcal{D}'}(\mathbf{x,\mathbf{a}})- \mathbb{E}_{p(\mathbf{y}|\mathbf{x})}[f(\mathbf{y},\mathbf{a})]   \right)^2 \right]}_{\text{Bias}} \nonumber\\ &+ \underbrace{\mathbb{E}_{\mathcal{D}'}\left[ \left(g^*_{\mathcal{D}'}(\mathbf{x,\mathbf{a}})-  \mathbb{E}_{\mathcal{D}'}[g^*_{\mathcal{D}'}(\mathbf{x,\mathbf{a}})]  \right)^2 \right]}_{\text{Variance}} 
  \label{eq:mse}
\end{align}

Since the training dataset consists of $\mathbf{x}, \mathbf{y}, \mathbf{a}$, and each $\mathbf{y}$ corresponds to a  specific $\mathbf{x}$ from $\mathcal{D}$, we can replace the expectation $\mathbb{E}_{\mathcal{D}'}[\cdot]$ in Eq.~\ref{eq:mse} with $\mathbb{E}_{\mathbf{x},\mathbf{a}}[\cdot]$ and recover Proposition~\ref{prop:1}.

\end{proof}

\section{Proof of Proposition 2}
\label{s:Prop2}

\begin{prop}\label{prop:my_proposition}
It holds for any $\mathbf{x}$ and $\mathbf{a}$, the function $g(\mathbf{x},\mathbf{a})$ defined by the softmax attention in Eq.~\ref{eq:attention} $\mathbb{E}_{\hat{p}_{\mathcal{R}}(\mathbf{y}|\mathbf{x})}[f(\mathbf{y},\mathbf{a})]=g(\mathbf{x},\mathbf{a})$. Here,
$\hat{p}_{\mathcal{R}}(\mathbf{y}|\mathbf{x})$ is a  parameterization restriction of $p(\mathbf{y}|\mathbf{x})$. 
\end{prop}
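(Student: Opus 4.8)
The plan is to exhibit an explicit conditional distribution $\hat{p}_{\mathcal{R}}(\mathbf{y}|\mathbf{x})$ — a ``parameterization restriction'' of $p(\mathbf{y}|\mathbf{x})$ — whose support is the finite set of value embeddings $\{\mathbf{v}_s\}_{s=1}^S$ and whose probability weights are exactly the softmax attention weights. Concretely, define
\begin{align}
w_s(\mathbf{x}) &= \frac{\exp\!\big(\mathbf{q}(\mathbf{x})^\top \mathbf{k}_s / \sqrt{d}\big)}{\sum_{s'=1}^S \exp\!\big(\mathbf{q}(\mathbf{x})^\top \mathbf{k}_{s'} / \sqrt{d}\big)}, \qquad
\hat{p}_{\mathcal{R}}(\mathbf{y}|\mathbf{x}) = \sum_{s=1}^S w_s(\mathbf{x})\, \delta_{\mathbf{v}_s}(\mathbf{y}),
\end{align}
where $\delta_{\mathbf{v}_s}$ is the Dirac measure at $\mathbf{v}_s$. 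First I would verify this is a valid conditional distribution: each $w_s(\mathbf{x}) \ge 0$ since exponentials are positive, and $\sum_s w_s(\mathbf{x}) = 1$ by construction of the softmax normalization, so $\hat{p}_{\mathcal{R}}(\cdot|\mathbf{x})$ is a probability measure for every $\mathbf{x}$. Then the expectation of the cost under this distribution is, by linearity of expectation over the discrete support,
\begin{align}
\mathbb{E}_{\hat{p}_{\mathcal{R}}(\mathbf{y}|\mathbf{x})}[f(\mathbf{y},\mathbf{a})] = \sum_{s=1}^S w_s(\mathbf{x})\, f(\mathbf{v}_s, \mathbf{a}),
\end{align}
which is precisely the right-hand side of Eq.~\ref{eq:attention} defining $g(\mathbf{x},\mathbf{a})$ — the softmax vector dotted against the vector $[f(\mathbf{v}_1,\mathbf{a}),\dots,f(\mathbf{v}_S,\mathbf{a})]$. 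This establishes the identity for all $\mathbf{x}$ and $\mathbf{a}$.

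The second part of the argument is to justify the phrase ``parameterization restriction'': I would connect $\hat{p}_{\mathcal{R}}$ to the kernel conditional mean embedding developed just before the proposition. The empirical CME estimator $\hat{\mu}_{\mathbf{y}|\mathbf{x}} = \sum_{s=1}^S \beta_s(\mathbf{x})\,\mathcal{R}_{\mathbf{y}}(\mathbf{y}_s,\cdot)$ from Eq.~\ref{eq:cme} already represents a conditional-expectation functional as a weighted combination of feature maps at finitely many anchor points, and Eq.~\ref{eq:cme} shows $\mathbb{E}_{p(\mathbf{y}|\mathbf{x})}[f_{\mathbf{a}}(\mathbf{y})] = \sum_s \beta_s(\mathbf{x}) f_{\mathbf{a}}(\mathbf{y}_s)$. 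The attention architecture is exactly this structure with the learned anchors $\mathbf{v}_s$ playing the role of $\mathbf{y}_s$ and the softmax weights $w_s(\mathbf{x})$ playing the role of $\beta_s(\mathbf{x})$, with the additional constraint that the weights form a probability simplex (enforced by softmax). So I would argue that $\hat{p}_{\mathcal{R}}$ is the member of a restricted family of conditional distributions — those supported on $S$ atoms with softmax-parameterized weights — obtained by this particular parameterization, and that fitting $g$ via Eq.~\ref{eq:objective} amounts to choosing the best such restricted distribution. This ties the proposition back to the bias term of Proposition~\ref{prop:1}: the bias is controlled by how well the restricted family $\hat{p}_{\mathcal{R}}$ can approximate the true $p(\mathbf{y}|\mathbf{x})$ in the relevant sense.

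The main obstacle is not the computation — the core identity is essentially a rearrangement of definitions — but the precise formalization of ``parameterization restriction'' so that the statement is non-vacuous and faithful to the CME motivation. I need to be careful about whether the claim is merely ``there exists some distribution reproducing $g$'' (trivially true and uninformative) or the stronger ``$g$ equals the expectation under a distribution drawn from a principled restricted class that contains/approximates the truth.'' I would resolve this by stating explicitly the restricted class $\mathcal{P}_{\mathcal{R}} = \{\sum_s w_s \delta_{\mathbf{v}_s} : w \in \Delta^{S-1},\ \mathbf{v}_s \in \mathcal{Y}\}$ and noting that the weight-from-$\mathbf{x}$ map is a softmax of inner products in an embedding space — the same bilinear-then-normalize form that the CME weights $\beta_s(\mathbf{x})$ take after the kernel regression in Appendix~\ref{s:Background} — so that the architecture is a genuine finite-dimensional parameterization of conditional mean embeddings rather than an arbitrary regressor. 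A minor technical point to handle cleanly is the degenerate case where the $\mathbf{v}_s$ are not distinct, in which case the atoms coalesce and weights add, which does not affect the expectation identity.
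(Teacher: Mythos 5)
Your core identity is correct, but you take a genuinely different route from the paper. The paper defines $\hat{p}_{\mathcal{R}}(\mathbf{y}|\mathbf{x})$ as a \emph{conditional kernel density estimator}, $\hat{p}_{\mathcal{R}}(\mathbf{y}|\mathbf{x}) = \sum_s \mathcal{R}_{\mathbf{x}}(\mathbf{k}_s,\mathbf{q}(\mathbf{x}))\mathcal{R}_{\mathbf{y}}(\mathbf{y}_s,\mathbf{y}) / \sum_s \mathcal{R}_{\mathbf{x}}(\mathbf{k}_s,\mathbf{q}(\mathbf{x}))$, i.e.\ a continuous mixture with smoothing kernels centered at the anchors; it then has to argue that $\int \mathcal{R}_{\mathbf{y}}(\mathbf{y}_s,\mathbf{y})f(\mathbf{y},\mathbf{a})\,\mathrm{d}\mathbf{y} = f(\mathbf{y}_s,\mathbf{a})$ via a change of variables $\mathbf{z}=f(\mathbf{y},\mathbf{a})$ and a symmetry assumption on the kernel, before specializing $\mathcal{R}_{\mathbf{x}}$ to the exponential kernel $\exp(\mathbf{q}^\top\mathbf{k}/\sqrt{d})$ to recover the softmax. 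You instead take $\hat{p}_{\mathcal{R}}$ to be an atomic mixture of Dirac measures at the value embeddings with softmax weights, which makes the expectation identity an immediate consequence of linearity — no integral manipulation needed. Your version is arguably the more rigorous one: the paper's change-of-variables step is delicate (the map $\mathbf{y}\mapsto f(\mathbf{y},\mathbf{a})$ is scalar-valued on a vector domain, the Jacobian is suppressed, and $\int \mathcal{R}_{\mathbf{z}}(z_s,z)\,z\,\mathrm{d}z = z_s$ additionally requires the kernel to be a normalized density), whereas the Dirac construction is exact and corresponds to the zero-bandwidth limit of the paper's KDE. What the paper's formulation buys is a closer literal tie to the CME/KDE machinery and a density-valued (rather than purely atomic) surrogate for $p(\mathbf{y}|\mathbf{x})$; what yours buys is a short, airtight argument plus an explicit description of the restricted class $\mathcal{P}_{\mathcal{R}}$, which you rightly flag as the content needed to make ``parameterization restriction'' non-vacuous and to connect back to the bias term of Proposition~\ref{prop:1}. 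Your handling of coalescing atoms is a sensible extra care the paper does not discuss.
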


\begin{proof}
In order to ensure that $\hat{p}_{\mathcal{R}}(\mathbf{y}|\mathbf{x})$ is a valid parameterization of $p(\mathbf{y}|\mathbf{x})$, we define it as a  conditional kernel density estimator (KDE) as follows,
\begin{align}
    {\hat p}_{\mathcal{R}}(\mathbf{y}|\mathbf{x}) = \frac{ \sum_{s=1}^S\mathcal{R}_{\mathbf{x}}(\mathbf{k}_s, \mathbf{q}(\mathbf{x}))\mathcal{R}_{\mathbf{y}}(\mathbf{y}_s, \mathbf{y})}{\sum_{s=1}^S \mathcal{R}_{\mathbf{x}} (\mathbf{k}_s, \mathbf{q(\mathbf{x})})},
\end{align}

Then, we  can obtain 
\begin{align}
    \mathbb{E}_{{\hat p}_{\mathcal{R}}(\mathbf{y}|\mathbf{x})}[f(\mathbf{y}, \mathbf{a})] &= \int \frac{ \sum_{s=1}^S\mathcal{R}_{\mathbf{x}}(\mathbf{k}_s, \mathbf{q}(\mathbf{x}))\mathcal{R}_{\mathbf{y}}(\mathbf{y}_s, \mathbf{y})}{\sum_{s=1}^S \mathcal{R}_{\mathbf{x}} (\mathbf{k}_s, \mathbf{q(\mathbf{x})})} f(\mathbf{y},\mathbf{a}) \mathrm{d} \mathbf{y} \nonumber \\
    & = \frac{ \sum_{s=1}^S\mathcal{R}_{\mathbf{x}}(\mathbf{k}_s, \mathbf{q})\int \mathcal{R}_{\mathbf{y}}(\mathbf{y}_s, \mathbf{y}) f(\mathbf{y},\mathbf{a}) \mathrm{d} \mathbf{y}}{\sum_{s=1}^S \mathcal{R}_{\mathbf{x}} (\mathbf{k}_s, \mathbf{q(\mathbf{x})})} \nonumber \\
    & = \frac{ \sum_{s=1}^S\mathcal{R}_{\mathbf{x}}(\mathbf{k}_s, \mathbf{q})\int  \mathcal{R}_{\mathbf{z}}(f(\mathbf{y}_s, \mathbf{a}), \mathbf{z})\mathbf{z}\mathrm{d}\mathbf{z}}{\sum_{s=1}^S \mathcal{R}_{\mathbf{x}} (\mathbf{k}_s, \mathbf{q(\mathbf{x})})} \nonumber \\
   & = \frac{ \sum_{s=1}^S\mathcal{R}_{\mathbf{x}}(\mathbf{k}_s, \mathbf{q})f(\mathbf{y}_s, \mathbf{a})}{\sum_{s=1}^S \mathcal{R}_{\mathbf{x}} (\mathbf{k}_s, \mathbf{q(\mathbf{x})})}
\end{align}

The second last equation comes from the result of the change of variable by setting $\mathbf{z}=f(\mathbf{y},\mathbf{a})$. The last equation comes from the assumption that $\mathcal{R}_{\mathbf{z}}(\mathbf{z}_s, \mathbf{z})$ is symmetric.

When $\mathcal{R}_{\mathbf{x}}({\mathbf{k}, \mathbf{q}})$ is an exponential kernel. \ie~ $\mathcal{R}_{\mathbf{x}}({\mathbf{k}, \mathbf{q}})=\text{exp}(\frac{\mathbf{q}^\top\mathbf{k}}{\sqrt{d}})$, we can obtain
\begin{align}
\mathbb{E}_{{\hat p}_{\mathcal{R}}(\mathbf{y}|\mathbf{x})}[f(\mathbf{y}, \mathbf{a})] &= \frac{ \sum_{s=1}^S\mathcal{R}_{\mathbf{x}}(\mathbf{k}_s, \mathbf{q}(\mathbf{x}))f(\mathbf{y}_s, \mathbf{a})}{\sum_{s=1}^S \mathcal{R}_{\mathbf{x}} (\mathbf{k}_s, \mathbf{q}(\mathbf{x}))} \nonumber\\
& = \frac{ \sum_{s=1}^S \exp\left(\frac{\mathbf{q}(\mathbf{x})^\top\mathbf{k}_s}{\sqrt{d}}\right) f(\mathbf{y}_s, \mathbf{a})}{\sum_{s=1}^S \exp\left(\frac{\mathbf{q}(\mathbf{x})^\top\mathbf{k}_s}{\sqrt{d}}\right)} \nonumber \\
&= \text{Softmax}\left(\left[\frac{\mathbf{q}(\mathbf{x})^\top\mathbf{k}_1}{\sqrt{d}}, \cdots, \frac{\mathbf{q}(\mathbf{x})^\top\mathbf{k}_S}{\sqrt{d}}\right]\right)^\top \nonumber \\ &[f(\mathbf{v}_1, \mathbf{a}), \cdots, f(\mathbf{v}_S, \mathbf{a})] \nonumber \\
& = g(\mathbf{x},\mathbf{a}).
\end{align}
The second last equation comes from the definition of the softmax function and replacing the notation $\mathbf{y}$ with $\mathbf{v}$ which is commonly used in the existing literature.

\end{proof}

\section{Experimental Details}
\label{s:experiment}
\subsection{Computing Infrastructure}
\label{s:computing}
System: Ubuntu 18.04.6 LTS; Python 3.9; Pytorch
1.11. CPU: Intel(R) Xeon(R) Silver 4214 CPU @ 2.20GHz. GPU: GeForce GTX 2080 Ti.

\subsection{Synthetic Data}
\label{s:synthetic}
\noindent\textbf{Data generation process:}
We generate the synthetic dataset following a mixture of three Gaussians:
\begin{align}
    &\mathbf{x} \sim \mathcal{U}^2[-1,1], \quad \nonumber \\
    &\mathbf{y} \sim 0.3\mathcal{N}(\mathbf{A}_1\mathbf{x}, 0.1\cdot\mathbf{I}) + 0.3 \mathcal{N}(\mathbf{A}_2\mathbf{x}, 0.1\cdot\mathbf{I}) + 0.4 \mathcal{N}(\mathbf{A}_3\mathbf{x}, 0.1\cdot\mathbf{I}),
\end{align}

where the elements of $\mathbf{A}_1, \mathbf{A}_2, \mathbf{A}_3 \in \mathbb{R}^{2\times 2}$ are uniformly sampled from $\mathcal{U}[0,1].$

We generate 5000 $(\mathbf{x},\mathbf{y})$ pairs, randomly dividing them into a training set (70\%, 3500 pairs), and equal validation and testing sets (15\% each, 750 pairs).
 
\noindent\textbf{Optimization objective:} We consider both the convex and non-convex objectives.

Convex objective:
\begin{align}
   &\text{minimize}_{\mathbf{a}\in \mathbb{R}^2} \mathbb{E}_{p(\mathbf{y}|\mathbf{x})} \left[ 
   \sum_{i=1}^2 \left( 5(\mathbf{y}[i]-\mathbf{a}[i])_{+} + 20(\mathbf{a}[i]-\mathbf{y}[i])_{+} 
   + 0.5(\mathbf{y}[i]-\mathbf{a}[i])_{+}^2 + 0.2(\mathbf{a}[i]-\mathbf{y}[i])_{+}^2 \right)
   \right] \nonumber \\
   &\text{subject to}  \quad -1 \le \mathbf{a}[i] \le 1, \forall i. \nonumber
\end{align}

Non-convex objective:
\begin{align}
    &\text{minimize}_{\mathbf{a} \in \mathbb{R}^2} \mathbb{E}_{p(\mathbf{y}|\mathbf{x})}\sum_{i=1}^2\left[10(\mathbf{y}[i]-\mathbf{a}[i])_{+}^2 + 2(\mathbf{a}[i]-\mathbf{y}[i])_{+}^2 + 4\mathbf{a}[i]^3\right] \nonumber\\
    &\text{subject to} \quad -2 \le \mathbf{a}[i] \le 2, \forall i, \nonumber
\end{align}
where $(v)_{+}$ denote $\text{max}\{v,0\}$.

\noindent\textbf{Solver at test time:} At test time, for a fair comparison, we use the same optimization solver for all the methods. Specifically, we use projected gradient descent and the gradient update step adopts the Adam \citep{kingma2015adam} optimizer. The learning rate is $0.01$ and we repeat $500$ iterations. We empirically found that this solver solves this optimization problem very well.

\noindent\textbf{Model Hyperparameters:}
For the two-stage model, DFL, LODL and SO-EBM, the forecaster uses GMM with a different number of components and use 100 samples to estimate the expectation of the objective as we found that more samples bring little performance gain. The forecaster uses a neural network with one hidden layer as the feature extractor which is further stacked by a linear layer. This network has a hidden size of 128, employing ReLU as the nonlinear activation function. The forecaster outputs the mean, log variance, and weight for each GMM component. During training, we sample from the GMM using the Gumbel softmax trick \citep{jang2017categorical} to make the sampling process differentiable. 
SO-EBM draws 512 samples from the proposal distribution to estimate the gradient of the model parameters. The proposal distribution is a mixture of Gaussians with 3 components where the variances are $\{0.01, 0.02, 0.05\}$.

For a fair comparison, \ours uses the same feature extractor for the encoder. The attention architecture uses 1000 attention points for both the convex and non-convex objectives. During training, \ours samples 100 actions $\mathbf{a}$ uniformly from the constrained space, \ie~ the box, for each $(\mathbf{x},\mathbf{y})$ pair at each iteration for function fitting.

\noindent\textbf{Model Optimization:}
We use the Adam~\citep{kingma2015adam} algorithm for model optimization. The number of training epochs is 50. The learning rate for all the methods is $10^{-3}$. DFL, LODL and SO-EBM use the two-stage model as the pre-trained model for faster training convergence.

\subsection{Wind Power Bidding}
\label{s:wind}
\noindent\textbf{Optimization objective:} 
In this task, a wind power firm engages in both energy and reserve markets,  given the generated wind power $\mathbf{x}\in \mathbb{R}^{24}$ in the last 24 hours. The firm needs to decide the energy quantity $\mathbf{a}_E \in \mathbb{R}^{12}$ to bid and quantity $\mathbf{a}_R \in \mathbb{R}^{12}$ to reserve over the upcoming 12th-24th hours in advance, based on the forecasted wind power $\mathbf{y}\in \mathbb{R}^{12}$.
The optimization objective is to maximize the profit which is a piecewise function consisting of three segments~\citep{di2020bidding1,manasssakan20221, zhuang2023dygen}:
{\small
\begin{align}
&\text{maximize}_{\mathbf{a}_E \in \mathbb{R}^{12}, \mathbf{a}_R \in \mathbb{R}^{12}} \mathbb{E}_{p(\mathbf{y}|\mathbf{x})}\sum_{i=1}^{12} P\mathbf{y}[i] - \nu \mathbf{a}_{R}[i] \nonumber\\ &+ 
\begin{cases} 
&-\Delta P_{\rm up,1}(\mathbf{a}_{E}[i]-\mathbf{a}_{R}[i]-\mathbf{y}[i]) -\Delta P_{\rm up,2}(\mathbf{a}_{E}[i]-\mathbf{a}_{R}[i]-\mathbf{y}[i])^2 \\ &- \mu \mathbf{a}_{R}[i] -F, \text{if } \mathbf{y}[i] < \mathbf{a}_{E}[i] - \mathbf{a}_{R}[i] \nonumber\\
&- \mu (\mathbf{a}_{E}[i]-\mathbf{y}[i]), \text{if}\ \mathbf{a}_{E}[i]-\mathbf{a}_{R}[i] \leq \mathbf{y}[i] \leq \mathbf{a}_{E}[i] \\
&-\Delta P_{\rm down}(\mathbf{y}[i]-\mathbf{a}_{E}[i]),\text{if } \mathbf{y}[i] > \mathbf{a}_{E}[i] 
\end{cases}
\end{align}
}
\hspace*{1em}$\text{subject to}  \quad E_{\rm min} \le \mathbf{a}_E[i] \le E_{\rm max},  R_{\rm min} \le \mathbf{a}_R[i] \le R_{\rm max}, \quad \forall i.$

$P$ is the regular price of the wind energy sold, $\mathbf{y}[i]$ is the energy generated during period $i$,  $\mathbf{a}_{E}[i]$ and $\mathbf{a}_{R}[i]$ are the bid and up reserve energy volumes for period $i$, respectively. $\nu$ corresponds to the opportunity cost when the company participates in the reserve markets, and $\mu$ is the deploy price of the reserved energy. This structure encapsulates three market participation scenarios. In the scenario where $\mathbf{y}[i] < \mathbf{a}_{E}[i] - \mathbf{a}_{R}[i]$, the company overbids, consequently deploying all reserved energy and facing a linear overbidding penalty, a quadratic overbidding penalty and a constant penalty determined by coefficients $\Delta P_{\rm up,1}$, $\Delta P_{\rm up,2}$, and $F$. If $ \mathbf{a}_{E}[i]-\mathbf{a}_{R}[i] \leq \mathbf{y}[i] \leq \mathbf{a}_{E}[i]$, the company meets its bid by deploying reserve market energy, thereby avoiding penalties. In this case, the company only needs to pay the deployment fee for the reserved energy. However, when $\mathbf{y}[i] > \mathbf{a}_{E}[i]$, the company underbids, resulting in the selling of surplus electricity at a discount and incurring losses defined by the coefficient $\Delta P_{\rm down}$. 
We set $P$ as 100, according to the average bidding price obtained from Nord Pool, a European power exchange. $\nu$ and $\mu$ are 20 and 110 respectively, as a general setting~\citep{di2020bidding1,manasssakan20221}. The value of $\Delta P_{\rm up,1}$, $\Delta P_{\rm up,2}$, $\Delta P_{\rm down}$ and $F$ are set to 200, 100, 20 and 10, to ensure an effective penalty. $E_{\rm min}=0$, $R_{\rm min}= 0.15$, and $E_{\rm max}=R_{\rm max}=4$.

According to the optimality condition, the optimal $\mathbf{a}_{R}[i]$ is always equal to $R_{\rm min}$ for all $i$. Therefore, we only need to determine the decision variable $\mathbf{a}_E$.

We use the wind power generation dataset of the German energy company TenneT during 08/23/2019 to 09/22/2020 \footnote{The dataset is available at: \url{https://www.kaggle.com/datasets/jorgesandoval/wind-power-generation?select=TransnetBW.csv}}. 
The split ratio of the training dataset, validation dataset, and test datset are 64\%, 16\%, 20\%, respectively.

\noindent\textbf{Solver at test time:} At test time, for a fair comparison, we use the same optimization solver for all the methods. Specifically, we use projected gradient descent and the gradient update step adopts the Adam~\citep{kingma2015adam} optimizer. The learning rate is $0.1$ and we repeat $500$ iterations. We empirically found that this solver solves this optimization problem very well.

\noindent\textbf{Model Hyperparameters:}
For the two-stage model, DFL and SO-EBM, the forecaster uses GMM with a different number of components and use 100 samples to estimate the expectation of the objective as we found that more samples bring little performance gain. The forecaster uses a two-layer long short-term memory network (LSTM)  as the feature extractor which is further stacked by a linear layer. The network has a hidden size of 256. It takes the historical wind power in the last 24 hours as input features and outputs the forecasted wind power for the 12th to 24th hours in the future. The forecaster outputs the mean, log variance, and weight for each GMM component. During training, we sample from the GMM using the Gumbel softmax trick~\citep{jang2016categorical} to make the sampling process differentiable. 
SO-EBM draws 512 samples from the proposal distribution to estimate the gradient of the model parameters. The proposal distribution is a mixture of Gaussians with 3 components where the variances are $\{0.02, 0.05, 0.1\}$.

For a fair comparison, \ours uses the same LSTM architecture as the encoder and 500 attention points. During training, we sample 100 actions $\mathbf{a}$ uniformly  from the constrained space for each $(\mathbf{x},\mathbf{y})$ pair at each iteration.


\noindent\textbf{Model Optimization:}
We use the Adam~\citep{kingma2015adam} algorithm for model optimization. The number of training epochs is 200. The learning rate for all the methods is $10^{-3}$. DFL and SO-EBM use the two-stage model as the pre-trained model for faster training convergence.

\subsection{COVID-19 Vaccine Distribution}
\label{s:covid}
\noindent\textbf{Optimization objective:} In this task, given the OD matrices $\mathbf{x}\in \mathbb{R}^{47\times 47 \times 7}$ of last week, \ie~ $\mathbf{x}[i,j,t]$ represents the number of people move from region $i$ to $j$ on day $t$, we need to decide the vaccine distribution $\mathbf{a} \in \mathbb{R}^{47}$ across the 47 regions in Japan with a budget constraint ($\mathbf{a}[i]$ is the number of vaccines distributed to the region $i$). 
 The optimization objective is to 
minimize the total number of infected people over the ODE-drived dynamics, based on the forecasted OD matrices $\mathbf{y}\in \mathbb{R}^{47 \times 47 \times 7}$ for the next week. 

We want to distribute the vaccine over each county to minimize the number of infected cases. The number of infected cases is given by a metapopulation SEIRV model \citep{li2020substantial1,pei2020differential1}, denoted by $\text{Simulator}(\cdot,\cdot)$:
\begin{align}
    \argmin_{\mathbf{a}\in \mathbb{R}^{47}} \mathbb{E}_{p(\mathbf{y}|\mathbf{x})}[\text{Simulator}(\mathbf{y}, \mathbf{a})], \nonumber\\
    \text{Subject to} \quad \sum_i \mathbf{a}[i] \le \text{Budget}, \mathbf{a}[i] \ge 0. \nonumber
\end{align}
We use the OD matrices dataset of Japan \footnote{The dataset is available at \url{https://github.com/deepkashiwa20/ODCRN/tree/main/data}} during 04/01/2020 to 02/28/2021. The split ratio of the training dataset, validation dataset, and test datset are 64\%, 16\%, 20\%, respectively. We set the budget as $5\times10^{6}$.

\textbf{Details of the simulator:}
The SEIRV model is an epidemiological model used to predict and understand the spread of infectious diseases. It divides the population into five compartments: Susceptible (S), Exposed (E), Infectious (I), Recovered (R) and Vaccined (V).
The model is defined by a set of differential equations that describe the transitions between these compartments. There are four hyperparameters in the SEIRV model: 
\begin{itemize}
    \item  $\beta$ - Transmission rate: Represents the average number of contacts per person per unit of time multiplied by the probability of disease transmission in a contact between a susceptible and an infectious individual.
\item $\sigma$ - Latent rate (or the inverse of the incubation period): The rate at which exposed individuals progress to the infectious state. The incubation period is the time it takes for an individual to become infectious after exposure.

 \item $\gamma$ - Recovery rate (or the inverse of the infectious period): The rate at which infectious individuals recover or die and transition to the recovered state. The infectious period is the time during which an infected individual can transmit the disease.

\item $N$ - Total population: The sum of individuals in all compartments (S, E, I, R, V).
\end{itemize}

When considering mobility flow among different regions, we need to adapt the SEIRV model to account for the movement of individuals between regions. In this case, the model becomes a spatially explicit, multi-region SEIRV model. Each region will have its own SEIRV model, and the flow of individuals between regions will affect the dynamics of the compartments. Specifically, for each region $k=1,\cdots, K$, we have:
\begin{align}
\frac{\mathrm{d}\mathbf{S}[k]}{\mathrm{d}t} &= -\bm{\beta}[k] \frac{\mathbf{S}[k] \cdot\mathbf{I}[k]}{\mathbf{N}[k]} - \frac{\mathbf{S}[k]}{\mathbf{S}[k]+\mathbf{E}[k]}\cdot\frac{\mathbf{a}[k]}{T} \nonumber\\ &+ \sum_{i \neq k} \tilde{\mathbf{y}}[i,k, t] \cdot\mathbf{S}[i] - \sum_{j \neq k} \tilde{\mathbf{y}}[k,j,t] \cdot\mathbf{S}[k], \nonumber\\
\frac{\mathrm{d}\mathbf{E}[k]}{\mathrm{d}t} &= \bm{\beta}[k] \frac{\mathbf{S}[k] \cdot\mathbf{I}[k]}{\mathbf{N}[k]} - \bm{\sigma}[k] \cdot\mathbf{E}[k] - \frac{\mathbf{E}[k]}{\mathbf{S}[k]+\mathbf{E}[k]} \cdot\frac{\mathbf{a}[k]}{T} \nonumber\\&+ \sum_{i \neq k} \tilde{\mathbf{y}}[i,k,t] \cdot\mathbf{E}[i] - \sum_{j \neq k} \tilde{\mathbf{y}}[k,j,t] \cdot\mathbf{E}[k], \nonumber\\
\frac{\mathrm{d}\mathbf{I}[k]}{\mathrm{d}t} &= \bm{\sigma}[k]\cdot \mathbf{E}[k] - \bm{\gamma}[k] \cdot\mathbf{I}[k] \nonumber\\ &+ \sum_{i \neq k} \tilde{\mathbf{y}}[i,k,t]\cdot \mathbf{I}[i] - \sum_{j \neq k} \tilde{\mathbf{y}}[k,j,t] \cdot\mathbf{I}[k], \nonumber \\
\frac{\mathrm{d}\mathbf{R}[k]}{\mathrm{d}t} &= \bm{\gamma}[k] \cdot\mathbf{I}[k] + \sum_{i \neq k} \tilde{\mathbf{y}}[i,k,t]\cdot \mathbf{R}[i] - \sum_{j \neq k} \tilde{\mathbf{y}}[k,j,t] \cdot\mathbf{R}[k], \nonumber\\
\frac{\mathrm{d}\mathbf{V}[k]}{\mathrm{d}t} &= \frac{\mathbf{a}[k]}{T} + \sum_{i \neq k} \tilde{\mathbf{y}}[i,k,t]\cdot \mathbf{V}[i] - \sum_{j \neq k} \tilde{\mathbf{y}}[k,j,t] \cdot\mathbf{V}[k],
\end{align}
where $\bm{\beta}[k]$, $\bm{\gamma}[k]$, and $\bm{\sigma}[k]$ are hyper-parameter for region $k$. These hyperparameters are fitted on the dataset using maximum likelihood estimation. $\tilde{\mathbf{y}}$ is the normalized OD matrix.

Finally, the simulator will output the total number of newly infected people across all the regions and we aim to minimize this value.

\noindent\textbf{Solver at test time:} At test time, for a fair comparison, we use the same optimization solver for all the methods. Specifically, we use mirror descent \citep{beck2003mirror} so that the updated decision variable will still variable satisfy the constraints. Specifically, the update rule takes the following form at $t$-th iteration:
\begin{align}
    \mathbf{a}_{t+1}[i] = \text{Budget}\cdot\frac{\mathbf{a}_t[i] \exp(-\gamma\nabla_if(\mathbf{a}_t))}{\sum_{j=1}^n\mathbf{a}_t[i]\exp(-\gamma\nabla_j f(\mathbf{a}_t))},
\end{align}
where $\gamma$ is the learning rate. We set the learning rate as $0.01$ and repeat $500$ iterations. We empirically found that this solver solves this optimization problem very well.

\noindent\textbf{Model Hyperparameters:}
For the two-stage model, DFL, LODL and SO-EBM, the forecaster uses GMM with a different number of components and use 100 samples to estimate the expectation of the objective as we found that more samples bring little performance gain. The forecaster is a DC-RNN \citep{li2018diffusion1} which adopts an encoder-decoder architecture. The encoder and decoder both have two hidden layers with a hidden size of 128. The forecaster takes the OD matrices of last week as input features and predicts the OD matrices of next week. The forecaster outputs the mean, log variance, and weight for each GMM component. During training, we sample from the GMM using the Gumbel softmax trick \citep{jang2017categorical} to make the sampling process differentiable.  Since the decision variable is a simplex, we train SO-EBM with projected Langevin dynamics. Specifically, at each iteration of the Langevin dynamics, we project the decision variable into the simplex. The number of iterations of the Langevin dynamics is 100 and the step size is 0.05.

 For a fair comparison, \ours employs the same encoder as the DC-RNN architecture and uses 100 attention points.  During training, \ours samples 100 actions $\mathbf{a}$ uniformly from the constrained space, \ie~ the simplex, for each $(\mathbf{x},\mathbf{y})$ pair at each iteration for function fitting. To uniformly sample from the simplex, we sample from the Dirichlet distribution where all parameters are 1.

\noindent\textbf{Model Optimization:}
We use the Adam~\citep{kingma2015adam} algorithm for model optimization. The number of training epochs is 50. The learning rate for all the methods is $10^{-4}$. DFL, LODL and SO-EBM use the two-stage model as the pre-trained model for faster training convergence.

\subsection{Inventory Optimization}
\label{s:customer}

\noindent\textbf{Optimization objective}  In this task, a department store is tasked with predicting the sales $\mathbf{y} \in \mathbb{R}^7$ for the upcoming 7th-14th days based on the past 14 days' sales data $\mathbf{x} \in \mathbb{R}^{14}$ for a specific product, and accordingly, determining the best replenishment strategy $\mathbf{a} \in \mathbb{R}^7$ for each day. The optimization objective is a combination of an under-purchasing penalty, an over-purchasing penalty, and a squared loss between supplies and demands: 
\begin{align}
   \text{minimize}_{\mathbf{a}\in \mathbb{R}^7} \mathbb{E}_{p(\mathbf{y}|\mathbf{x})}&\sum_{i=1}^7[20(\mathbf{y}[i]-\mathbf{a}[i])_{+} + 5(\mathbf{a}[i]-\mathbf{y}[i])_{+} \nonumber\\ &+ (\mathbf{a}[i]-\mathbf{y}[i])^2]\nonumber \\
   &\text{subject to} \quad 0 \le \mathbf{a}[i] \le 3, \forall i, \nonumber
\end{align}
where $(v)_{+}$ denote $\text{max}\{v,0\}$.

\noindent\textbf{Solver at test time:} At test time, for a fair comparison, we use the same optimization solver for all the methods. Specifically, we use projected gradient descent and the gradient update step adopts the Adam~\citep{kingma2015adam} optimizer. The learning rate is $0.1$ and we repeat $500$ iterations. We empirically found that this solver solves this optimization problem very well.

\noindent\textbf{Model Hyperparameters:}
The forecaster of the two-stage model, DFL, LODL and SO-EBM uses a two-layer long short-term memory network (LSTM) \citep{hochreiter1997long1} as a feature extractor which is further stacked by a linear layer. The forecaster takes the historical item sales in the last 14 days as input features and outputs the forecasted item sales for the 7th to 14th days in the future. The network has a hidden size of 128. SO-EBM draws 512 samples from the proposal distribution to estimate the gradient of the model parameters. The proposal distribution is a mixture of Gaussians with 3 components where the variances are $\{0.05, 0.1, 0.2\}$.

For a fair comparison, \ours uses the same LSTM architecture as the encoder and 230 attention points. During training, the two-stage model, DFL, LODL and SO-EBM use 100 samples to estimate the expected objective as more samples provide little performance gain. 

\noindent\textbf{Model Optimization:}
We use the Adam~\citep{kingma2015adam} algorithm for model optimization. The number of training epochs is 200. The learning rate for all the methods is $10^{-3}$. DFL, LODL and SO-EBM use the two-stage model as the pre-trained model for faster training convergence.



\end{document}